\documentclass[11pt]{article}

\usepackage[margin=1in]{geometry}
\usepackage{amsmath,amssymb,amsthm}
\usepackage{booktabs}
\usepackage{graphicx}
\usepackage{xcolor}
\usepackage{pifont}
\usepackage[round]{natbib}
\usepackage[colorlinks=true,linkcolor=blue,citecolor=blue,urlcolor=blue]{hyperref}

\newtheorem{proposition}{Proposition}
\newtheorem{lemma}{Lemma}
\newtheorem{corollary}{Corollary}
\theoremstyle{remark}
\newtheorem{remark}{Remark}

\title{An Enclosed Mode Is a Gauge Choice: Topology Relative to Reach
in Certified Code World Models}

\author{Javier Aguilar Mart\'in\\ AGILabs (\href{https://javieraguilar.ai}{javieraguilar.ai})}

\date{}

\begin{document}

\maketitle

\begin{abstract}
A code world model accepted by a sampling gate can be exactly right on
everything the gate can see and arbitrarily wrong beyond it. We characterize
what a certified model can know --- and what its errors can cost --- when the
omitted structure is a topologically nontrivial region: an annular freeze
mode enclosing an unreachable interior. The \emph{gate quotient} makes the
question precise: acceptance-with-certainty determines the model exactly up
to the reachable query set, so everything beyond reach is gauge. On a minimal
ring instrument we prove the extreme case --- a wrong-topology filled-disc
artifact is unfalsifiable by \emph{any} sampling gate and bitwise harmless at
play --- and measure, with real large-language-model (LLM) synthesis across
three model families,
how a single knob (an angular channel of width $\gamma$) walks the same
artifact through three regimes: unfalsifiable-and-harmless,
falsifiable-and-costly, and instantly falsified. Three principles organize
the empirics. First, \textbf{danger is topology relative to reach, not
topology}: opening a channel the planner can use collapses the blind model's
exploitation (\texttt{play\_cost} $1.09 \to {\sim}0$ over a knee at
$\gamma \approx 0.1$), while a hidden channel with the \emph{same} first
Betti number keeps it at full strength ($1.12$). Second, \textbf{repair is
parameter-bound and sensor-bound}: from outside evidence no family recovers
the region (superstitious point fits); from inside, models pose the right
topology but cannot pin its parameters to the gate's precision --- and the
persistent-homology summary that guides them has a measurable,
geometric rather than budgetary resolution limit, and the posed
topology tracks that summary's wrong $\hat\beta_1$ rather than the
truth.
Third, \textbf{mitigation must match the dimension and direction of the
model's error}: point distrust-fences, a zero-dimensional cover, fail
against the one-dimensional closed boundary at their calibrated radius,
while a fence matched to the boundary's dimension and \emph{persisted}
across episodes collapses the exploitation to a two-lesson transient
($\mathrm{play\_cost}$ $0.999 \to 0.058$, truth-equal returns from the
second episode on). Against an \emph{invented} mode every fence fires
constantly and recovers nothing, and the dual certificate (freedom
points that locally un-freeze imagination where the model was refuted as
too pessimistic) collapses that failure symmetrically
($1.769 \to 0.029$): the two wrongnesses need opposite defenses, each
priced by its failure's lie rate. In $n$ dimensions the enclosing shell makes the identifiability
event near-certain ($r(n) \to 0$) while the danger stays fully exploitable
--- rarity and reachability are independent knobs of the danger law.
\end{abstract}

\section{Introduction}
\label{sec:intro}

In the Code World Model (CWM) paradigm, a language model synthesizes an
executable world model that a classical planner searches, and the model is
accepted when it reproduces sampled transitions. Two companion papers
established that this gate certifies the wrong thing, following a
quantitative law, $\mathrm{danger} = \mathrm{play\_cost} \times
(1-\mathrm{rarity})^N$: in discrete games \citep{aguilar2026verified}, and in
continuous hybrid systems, where an omitted mode is a rare rule and the
danger collapses to pure identifiability \citep{aguilar2026omitted}. The
continuous paper ended on a sharpened negative: repair-from-data is
geometry-dependent, and its two ablations located the mechanism in a
\emph{template prior} --- the synthesizer selects low-descriptive-complexity
region forms (1D thresholds, radial balls) instead of inducing the boundary,
flattening curves and curving flats alike. Curvature is not the axis. This
paper takes up the axis that survived: \textbf{topology} --- and shows that
the right formulation is not the topology of the mode but its topology
\emph{relative to what the certifying process and the planner can reach}.

Why an enclosed mode? Because it is the minimal setting where the
certification question and the topology question are the same question. An
annular freeze band --- a fence with an inside --- is the simplest region
whose defining feature, the hole, is invisible from one side: every
trajectory that approaches from outside stops at the outer wall, and
whether anything is behind it is, in a sense this paper makes exact,
\emph{not a property of the data at all}. Fences, containment shells, and
geofenced no-go regions are the practical shape of safety-critical hybrid
modes; a certification pipeline that cannot distinguish a fenced void from
a fenced hazard --- or a fence from a filled wall --- is certifying less
than it appears to. The question is what the gate's acceptance actually
pins down, and the answer turns out to be geometric: everything up to
reach, nothing beyond it, with the boundary between the two movable by
knobs that never touch the mode's own physics.

The instrument is deliberately minimal (Section~\ref{sec:instrument}): a 2D
thrust-and-drag plant with an annular freeze band (inner radius $3.5$, outer
$5.0$) wrapping a high-reward ``phantom'' lode, with three pre-registered
knobs --- an angular channel of width $\gamma$ (closed ring at $\gamma = 0$),
the channel's orientation (facing the start or hidden behind the lode), and
the start side (outside or inside the hole). Everything the paper claims
follows from what these knobs do to the \emph{reachable query set}
$\mathcal{R}$.

\paragraph{Contributions.}
\begin{enumerate}
\item \textbf{The gate quotient and the unfalsifiable-and-harmless theorem}
  (Section~\ref{sec:theory}). Acceptance-with-certainty determines a model
  exactly up to $\mu_{\mathrm{query}}$-null modification on $\mathcal{R}$;
  beyond $\mathcal{R}$ is gauge
  (Proposition~\ref{prop:quotient}). On the closed ring, the wrong-topology
  filled-disc artifact is unfalsifiable by any sampling gate \emph{and}
  bitwise planner-equivalent to the truth
  (Proposition~\ref{prop:harmless}, confirmed episode-for-episode).
  Monotonicity, continuity, and positivity of the rarity curves in $\gamma$
  round out the provable core (Section~\ref{sec:gammacurves}, stated and
  proved, one positivity witness machine-checked); the two monotonicity
  statements hold up to an explicit \emph{funnel defect}
  (Corollary~\ref{cor:funneldefect}) that we bound by measurement,
  $17\times$ below the effect --- unconditionally they are out of reach,
  and a measured certificate shows why no pathwise proof can exist
  (Remark~\ref{rem:certificate}).
\item \textbf{The three-regime mechanism, without LLMs}
  (Section~\ref{sec:mechanism}). One artifact walks through
  unfalsifiable-and-harmless ($\gamma = 0$), falsifiable-and-costly (facing
  $\gamma > 0$), and instantly-falsified (inside start) as the knobs move.
  Two readings matter later: an \emph{invented} mode is exploited
  \emph{below random} from inside ($\mathrm{play\_cost}$ $1.77$ --- the dual
  of phantom-free-space exploitation), and the hidden channel is
  observationally \emph{identical} to the closed ring --- same $\beta_1$
  change, no observable consequence.
\item \textbf{LLM synthesis on the closed ring, three families}
  (Section~\ref{sec:synthesis}). The identifiability event is
  family-independent (GPT-5.x, Qwen, Claude: every mode-absent sample yields
  a certified, blind, exploited model at $\mathrm{play\_cost} \approx
  1.12$). None of the three tested families repairs the ring from outside
  evidence --- a behavioral
  audit shows the gate-passing ``repairs'' are measure-zero point fits.
  From inside, models \emph{pose} the right topology but repair is
  \textbf{parameter-bound}: the single gate-certified recovery (1/20) used the
  gauge-free disc-complement form whose one parameter is the true round
  radius anchored in the reward specification --- and the strongest
  cross-family repairer (Claude, 3/3) used exactly the same form.
\item \textbf{The open-ring arm: danger is topology relative to reach}
  (Section~\ref{sec:openring}). Pre-registered sweep over $\gamma$: the
  blind model's exploitation collapses over a knee at $\gamma \approx 0.1$
  (channel arc-width comparable to the planner's step) when the channel
  faces the start, and \emph{persists at full strength} ($1.116$) at the
  same $\gamma$ when the channel is hidden --- identical $\beta_1$, opposite
  danger. Gate-pass from inside stays ${\approx}0$ at every $\gamma$, and
  the two certified passes at wide $\gamma$ are \emph{certified-wrong}:
  mode-blind on the probes, harmless only because the open ring no longer
  obstructs.
\item \textbf{Mitigation obeys a covering law --- and the defense that
  works pays it once} (Section~\ref{sec:mitigation}). The companion
  paper's distrust-fence defense, verbatim, does nothing on the ring at its
  calibrated radius: point fences are a zero-dimensional cover of a
  one-dimensional boundary, and the planner re-crosses through unfenced
  imagined corridors (cost = boundary measure over fence radius). Matching
  the fence's dimension to the boundary's (violations linked into
  tangentially-extended segments) \emph{and persisting it across episodes}
  collapses the exploitation to a two-lesson transient:
  $\mathrm{play\_cost}$ $0.999 \to 0.058$, with returns equal to the
  truth planner's from episode 2 on --- the fence engineers the blind model
  back into the truth's gate-equivalence class on the operative side.
  Against the \emph{invented} mode every distrust variant fires constantly
  and changes nothing --- and the \emph{dual} defense (freedom patching:
  un-freeze imagination, via the pinned integrator, where the model was
  refuted as too pessimistic) collapses that failure at once
  ($1.769 \to 0.029$, near-truth returns from episode 1). The two
  wrongnesses need opposite defenses, and each defense's cost is set by its
  failure's lie rate: false obstructions refute themselves at every step;
  false freedoms only at the rare boundary.
\item \textbf{The evidence sensor has finite resolution --- and the loop
  follows its report} (Section~\ref{sec:sensor}). The pre-registered
  persistent-homology summary reports $\hat\beta_1 = 1$ for every channel
  narrower than ${\sim}2$ arc-units (flip at $\gamma \approx 1.8$) even
  though the true $\beta_1 = 0$ for all $\gamma > 0$; the posed artifact
  topology tracks the \emph{guidance}, not the truth. A pre-registered
  flipped-summary crossover ($60$ paired seeds) is directionally
  consistent with the claim line steering the artifact ($9{:}2$
  discordant split) but not significant ($p = 0.065$), so the causal
  reading stays unearned: what is measured is the association. On the
  $n$-dimensional shell, inside-start evidence recovers $\beta_{n-1}$ up to
  $n = 5$ where outside evidence recovers it at \emph{no} $n$ --- recovery
  is start-governed, and the Niyogi--Smale--Weinberger density bound makes
  the outside failure computable in advance.
\item \textbf{Dimension as the rarity knob; the two axes are independent}
  (Section~\ref{sec:ndim}). On ShellField-$n$ the contact rarity collapses
  geometrically (measured factor $0.411$ per dimension; the exponential
  rate is proved under the isotropic action interface, and an explicit
  $4h/(n\kappa^2)$ bound for the instrument's box thrust; contacts
  $1/600$ at $n = 4$ and at $n = 6$, a censored zero at $n = 5$ ---
  the $600$-rollout calibration is at its own resolution floor from
  $n = 4$ on, and the rate above is measured on $10{,}000$ rollouts per
  dimension), making mis-synthesis
  near-certain, while a competent vector planner is exploited at
  $\mathrm{play\_cost} \approx 1.0$ at \emph{every} $n \le 6$ --- rarity
  (synthesis axis) and reachability (play axis) are independent knobs, and
  a high-$n$ enclosed mode maxes out both.
\item \textbf{A behavioral audit that separates absent, wrong, and
  unidentifiable structure} (used throughout): probe each
  artifact's \texttt{step()} on a state grid, classify the shape of its
  deviation-from-integrator set. It reproduces the companion paper's hand
  inspection, reclassifies the inside-start failures, and separates
  ``posed the wrong structure'' from ``posed the right structure with
  unidentifiable parameters.''
\end{enumerate}

The practitioner's summary is one sentence long: \emph{a sampling gate
certifies the reachable restriction of the model and nothing else} --- so
the safety question is never ``is the model right?''\ but ``does the
place where it is wrong intersect the operative reach of a competent
planner?''\ Everything in this paper --- the harmless wrong topology, the
danger knee, the hidden-channel persistence, the parameter-bound repairs,
the mitigation asymmetry, the sensor's resolution limit, and the
dimensional rarity collapse --- is that one sentence measured from a
different direction.

\section{The instrument: a mode with an inside}
\label{sec:instrument}

\texttt{RingField2D} is the companion paper's plant with one structural
change. The state is $(x, y, v_x, v_y)$; a scalar action
$a \in [-a_{\max}, a_{\max}]$ maps to a thrust heading
$\phi = \pi a / a_{\max}$; velocities integrate with gain $3.0$ and drag
$0.3$ at $dt = 0.1$ (the same semi-implicit integrator family as
PatchField2D, so every planner and every synthesis-protocol constant carries
over unchanged). The reward is two radial sigmoid lodes: a small
\emph{real} one (amplitude $0.3$) behind the start at $(-6, 0)$, and a
large \emph{phantom} one (amplitude $1.0$) at $c = (12, 0)$. The structural
change is the mode: an \textbf{annular freeze band}
$A = \{\, r_{\mathrm{in}} \le \lVert p - c \rVert \le
r_{\mathrm{out}} \,\}$ with $r_{\mathrm{in}} = 3.5$ and
$r_{\mathrm{out}} = 5.0$, wrapping the phantom lode. A step whose next
position lands in $A$ freezes at the \emph{previous} position with zero
velocity --- the same inelastic-stop semantics as the companion paper's
patches, but now the region has an inside: the ring separates the plane, and
its defining feature ($\beta_1 = 1$, the hole) is exactly the part no
outside observer ever touches.

Three pre-registered knobs deform the relationship between the mode and what
is reachable, without touching the mode's local physics:
\begin{itemize}
\item \textbf{Channel width $\gamma$} (radians): an angular gap cut from
  the ring. $\gamma = 0$ is the closed ring ($\beta_1 = 1$); any
  $\gamma > 0$ is an open C-shape ($\beta_1 = 0$, contractible).
\item \textbf{Channel orientation}: centered facing the start
  (\texttt{gap\_center} $= \pi$) or hidden on the far side
  (\texttt{gap\_center} $= 0$). Both have identical topology at equal
  $\gamma$; only the channel's position relative to the start and the
  planner's optimal path differs.
\item \textbf{Start side}: outside the ring (the default; the interior is
  then unreachable at $\gamma = 0$, Lemma~\ref{lem:crossing}) or inside
  the hole (the interior evidence regime).
\end{itemize}
The defaults were frozen once at calibration and never tuned per cell:
episode horizon $h = 80$ steps; thickness $w = 1.5$ exceeds the maximal
per-step displacement
$\Delta \le 1.0$ (Section~\ref{sec:theory}), so the no-jump-over lemma
holds with margin for real and imagined rollouts alike. The outside contact
rarity at the defaults is $r = 0.0312$ per rollout ($937/30{,}000$
calibration rollouts; the mechanism grid's own $400$-rollout realization
reads $18/400 = 0.045$, statistically compatible). The synthesis protocol is
the companion paper's, verbatim: $N = 40$ uniform-random rollouts as
evidence and gate, the integrator pinned in the contract, exactness
tolerance $\varepsilon = 10^{-9}$, at most $5$ refine iterations, and
\texttt{play\_cost} the paired-seed normalized regret of planning on the
artifact and executing in the truth. Section~\ref{sec:ndim} generalizes the
instrument to \texttt{ShellField}-$n$ --- the same two-lode geometry with a
spherical shell $S^{n-1}$ in the first two coordinates and a thrust-vector
action $\vec a \in [-1, 1]^n$ (norm-capped) --- so that the ambient
dimension $n$ becomes a fourth knob.

\section{Theory: the gate quotient}
\label{sec:theory}

This section establishes the paper's governing equivalence: a sampling
gate identifies dynamics only on its reachable query set, so everything
beyond reach is gauge. It then specializes that quotient to the enclosed
ring --- where the extreme case is a theorem --- and shows how the
channel width reopens identifiability from an exact zero.

\paragraph{Reachable queries define the gate's equivalence class.}
Consider deterministic dynamics $f : S \times A \to S$ on
$S \subseteq \mathbb{R}^d$, an initial distribution $\mu_0$, an episode
horizon
$h$, and a \emph{gate policy} $\rho$ selecting actions (uniform-random in
our gates, though nothing below needs that). The \textbf{reachable query
set} $\mathcal{R}(f, \mu_0, \rho, h)$ is the set of pairs $(s, a)$ at
which some length-${\le}h$ trajectory under $(f, \mu_0, \rho)$ queries
$f$ with positive probability --- taken in continuous spaces as the support
of the induced occupation measure.
A \emph{sampling gate} of any size $N$ draws trajectories under
$(f, \mu_0, \rho)$ and accepts a candidate $\hat f$ iff $\hat f$
reproduces every queried transition (within any tolerance, including
exactly).

\begin{proposition}[gate quotient]
\label{prop:quotient}
Let $\mathcal{R}(f, \mu_0, \rho, h)$ be the reachable query set of the truth
$f$ under the gate policy $\rho$, and let $\hat f$ be any model with
$\hat f|_{\mathcal{R}} = f|_{\mathcal{R}}$. Then for every $N$ the sampling
gate accepts $\hat f$ with probability~1, and the trajectory law of any
policy whose queries stay in $\mathcal{R}$ is identical under $f$ and
$\hat f$. Conversely, let $\mu_{\mathrm{query}}$ be the gate's query-occupation
measure (the expected number of queries per rollout landing in each
measurable set of $S \times A$) and let $D$ be the set where $\hat f$
disagrees with $f$ beyond the gate's tolerance. Acceptance with
probability~$1$ at every $N$ forces $\mu_{\mathrm{query}}(D) = 0$: the
model may differ from $f$ only on a $\mu_{\mathrm{query}}$-null subset of
$\mathcal{R}$ and on the gauge complement
$\mathcal{G} = (S \times A) \setminus \mathcal{R}$, which no sample drawn
under $\rho$ can falsify. In particular the extension class
$E(f) = \{ \hat f : \hat f|_{\mathcal{R}} = f|_{\mathcal{R}} \}$ is
accepted with certainty, and is the full accepted class up to
$\mu_{\mathrm{query}}$-null sets.
\end{proposition}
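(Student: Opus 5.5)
The plan is a coupling argument for the forward direction and a first-moment argument for the converse.

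\textbf{Forward direction.} Fix $N$ and run the gate's $N$ rollouts under $(f,\mu_0,\rho)$. By induction on the step index $t \le h$, every queried pair $(s_t,a_t)$ lies in $\mathcal{R}$ almost surely. This holds because $\mathcal{R}$ is the support of the occupation measure, so the set of trajectories that ever leave it has probability zero. Since $\hat f|_{\mathcal{R}} = f|_{\mathcal{R}}$, each queried transition is reproduced exactly. Hence, with probability~$1$, every check passes at any tolerance, including $\varepsilon=0$. Taking a countable intersection over $N$ gives acceptance with probability~$1$ at every $N$.

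For the trajectory-law statement, couple the two processes on a common probability space. Use the same draw from $\mu_0$ and the same policy randomness. Then argue by induction that the states agree up to the first exit time from $\mathcal{R}$. Under the hypothesis that the policy's queries stay in $\mathcal{R}$, this exit time is almost surely $>h$. So the paths coincide almost surely and their laws are equal.

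\textbf{Converse.} Let $D$ be the disagreement set and let $Q_N$ be the number of gate queries landing in $D$ across the $N$ rollouts. Rejection occurs exactly when $Q_N \ge 1$, so acceptance with probability~$1$ means $Q_N=0$ almost surely. Therefore $\mathbb{E}[Q_N] = N\,\mu_{\mathrm{query}}(D) = 0$, which gives $\mu_{\mathrm{query}}(D)=0$; already $N=1$ suffices.

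Now split $D = (D\cap\mathcal{R}) \cup (D\cap\mathcal{G})$. The first piece is $\mu_{\mathrm{query}}$-null. The second is unconstrained, because $\mu_{\mathrm{query}}(\mathcal{G})=0$ by the definition of support: no sample can land there, except on a null event. This yields the gauge statement. Combining it with the forward direction shows that $E(f)$ is accepted with certainty, and that any accepted $\hat f$ agrees with some member of $E(f)$ off a $\mu_{\mathrm{query}}$-null set. Indeed, define $\tilde f = f$ on $\mathcal{R}$ and $\tilde f = \hat f$ on $\mathcal{G}$; then $\tilde f \in E(f)$ and $\{\tilde f\ne\hat f\}\subseteq D\cap\mathcal{R}$.

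\textbf{Main obstacle: measurability.} I would require $f$, $\hat f$ and the policy kernel to be measurable, and take $S\times A \subseteq \mathbb{R}^{d}\times\mathbb{R}^k$ to be second countable. Two facts then follow.
\begin{enumerate}
\item The support is well defined and its complement is null, since it is a countable union of null open sets.
\item $D = \{\lVert \hat f - f\rVert > \varepsilon\}$ is measurable.
\end{enumerate}

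The subtle point is in the forward direction's claim that trajectories stay in $\mathcal{R}$ almost surely. This needs the marginal law of each $(s_t,a_t)$ to be absolutely continuous with respect to $\mu_{\mathrm{query}}$. That holds because $\mu_{\mathrm{query}}$ is the sum over $t$ of these marginals, so each marginal is dominated by it. Once that is in place, everything else reduces to the first-moment identity $\mathbb{E}[Q_N]=N\mu_{\mathrm{query}}(D)$, which follows from linearity and the independence of the rollouts.
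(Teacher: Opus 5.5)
Your proposal is correct and follows the paper's proof essentially step for step: induction and coupling on the step index for the forward direction, and for the converse the fact that a nonnegative per-rollout count of $D$-queries that is almost surely zero has zero expectation. The only difference is that you go straight through $\mathbb{E}[Q_N]=N\,\mu_{\mathrm{query}}(D)$ (rightly noting that $N=1$ already suffices), where the paper passes through the single-rollout probability $p$ and the acceptance formula $(1-p)^N$. One small point to tighten: your inclusion $\{\tilde f\ne\hat f\}\subseteq D\cap\mathcal{R}$ holds only at tolerance $0$, and at positive tolerance it must be read as ``differs beyond tolerance'', which is the same qualifier implicit in the statement's ``up to $\mu_{\mathrm{query}}$-null sets''.
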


\begin{proof}
By induction on the step index, a trajectory under $(f, \mu_0, \pi)$ whose
queries lie in $\mathcal{R}$ is a trajectory under $(\hat f, \mu_0, \pi)$
with the same realizations: the state after step $t$ is a function of
$\mu_0$, the action sequence, and the queried values of $f$, which agree
with $\hat f$'s on $\mathcal{R}$. Gate draws use $\pi = \rho$, whose
queries lie in $\mathcal{R}$ by definition (up to a null set), so acceptance
statistics coincide; $f \in E(f)$ is accepted almost surely, hence so is
every member. For the converse, let $p$ be the probability that a single
gate rollout queries $D$ at least once. Rollouts are i.i.d., so the
size-$N$ gate accepts $\hat f$ with probability $(1 - p)^N$, and
acceptance with certainty at every $N$ forces $p = 0$. Finally $p = 0$
iff $\mu_{\mathrm{query}}(D) = 0$: the per-rollout count of $D$-queries
is a nonnegative random variable, so its expectation vanishes iff the
count is almost surely zero.
\end{proof}

\begin{remark}[relation to the companion identifiability proposition]
The companion paper's identifiability result conditions on the finite-sample
event ``the mode region was missed,'' of probability $(1 - r)^N \to 0$ when
$r > 0$. Proposition~\ref{prop:quotient} is its structural,
$N$-independent limit: on $\mathcal{G}$ the miss probability is $1$ for
\emph{every} $N$. The companion's prior caveat --- a prior or the
specification could still supply the mode --- is exactly the statement that
$E(f)$ is not a singleton.
\end{remark}

\begin{lemma}[metric crossing]
\label{lem:crossing}
If positions move in steps of norm at most $\Delta$ and the annulus
$A = \{ r_{\mathrm{in}} \le \lVert p - c \rVert \le r_{\mathrm{out}} \}$ has
thickness $w = r_{\mathrm{out}} - r_{\mathrm{in}} > \Delta$, then any
trajectory from outside to the open interior visits $A$ at an intermediate
step. Consequently, at $\gamma = 0$ with freeze-on-entry dynamics the open
interior is reach-null: the inner disc $\times$ all actions lies in
$\mathcal{G}$.
\end{lemma}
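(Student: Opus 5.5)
The plan is to argue with the continuous radial coordinate $\rho(p) = \lVert p - c\rVert$ along a discrete trajectory, and then lift the geometric statement to reach-nullity through Proposition~\ref{prop:quotient}'s definition of $\mathcal{R}$.

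\textbf{Step 1: a discrete intermediate-value argument.} Take a trajectory $p_0, p_1, \dots, p_T$. Assume it starts outside, $\rho(p_0) > r_{\mathrm{out}}$, and ends in the open interior, $\rho(p_T) < r_{\mathrm{in}}$. Each step satisfies $\lVert p_{t+1} - p_t\rVert \le \Delta$. The reverse triangle inequality then gives $|\rho(p_{t+1}) - \rho(p_t)| \le \Delta$, so $\rho$ changes by at most $\Delta$ per step.

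Let $t^\ast$ be the last index with $\rho(p_{t^\ast}) > r_{\mathrm{out}}$. This index exists because $t = 0$ qualifies, and $t^\ast < T$. We have $\rho(p_{t^\ast+1}) \le r_{\mathrm{out}}$. Suppose $p_{t^\ast+1}$ were not in $A$. Then $\rho(p_{t^\ast+1}) < r_{\mathrm{in}}$, and the radial jump would satisfy
\[
\rho(p_{t^\ast}) - \rho(p_{t^\ast+1}) > r_{\mathrm{out}} - r_{\mathrm{in}} = w > \Delta.
\]
That contradicts the per-step bound. Hence $p_{t^\ast+1} \in A$, and $t^\ast + 1 \le T$.

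Since $p_T \notin A$ (its radius is strictly below $r_{\mathrm{in}}$), the visit to $A$ happens at a step strictly before the endpoint. This is the required intermediate visit. The same argument applies if the trajectory starts on $A$'s outer boundary, with minor bookkeeping, so I would phrase the hypothesis as ``$p_0 \notin A \cup \{\rho < r_{\mathrm{in}}\}$.''

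\textbf{Step 2: freeze-on-entry stops the trajectory.} At $\gamma = 0$ the dynamics are freeze-on-entry. Whenever the proposed next position lands in $A$, the realized state is the previous position with zero velocity. So the realized positions never enter $A$. By Step 1, the realized trajectory therefore never reaches the open interior either.

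Two details need care here:
\begin{itemize}
\item \textbf{Proposed versus realized positions.} The step bound $\Delta$ must apply to the \emph{proposed} next position, i.e.\ the integrator's displacement. Step 1 should therefore be run on the sequence of proposals, each issued from a realized position. I would restate Step 1 as: ``from a realized position outside $A \cup$ interior, any proposal with displacement $\le \Delta$ lands outside, in $A$, but never in the open interior.''
\item \textbf{The velocity reset.} A freeze resets velocity to zero, and this only shrinks subsequent displacements. It keeps the bound $\Delta \le 1.0$ intact.
\end{itemize}
An induction on $t$ then shows that all realized states remain in $\{\rho > r_{\mathrm{out}}\}$, given $\mu_0$ supported there.

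\textbf{Step 3: reach-nullity.} Every query $(s_t, a_t)$ issued by any length-$\le h$ trajectory has $s_t$ outside the closed inner disc's open interior. So the occupation measure assigns zero mass to $\{\rho < r_{\mathrm{in}}\} \times A_{\text{act}}$. That set therefore lies outside the support, i.e.\ in $\mathcal{G}$. The support is closed, and the exterior region is at positive distance from the open interior, so no boundary-of-support subtlety arises.

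\textbf{The main obstacle} is justifying the uniform per-step bound $\Delta \le 1.0$ for the semi-implicit integrator. Velocity obeys a recursion of the form $v_{t+1} = (1 - 0.3\,dt)\,v_t + 3.0\,dt\,u_t$ with $\lVert u_t\rVert \le 1$. Its steady-state speed is bounded by $3.0/0.3 = 10$, so the displacement $dt\,\lVert v\rVert \le 1.0$ holds provided initial speeds are at most this bound. I would verify that $\mu_0$ starts at rest, or with bounded speed, and then propagate the bound by induction, noting that freezes only reset it.
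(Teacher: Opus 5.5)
Your proof is correct and follows the paper's argument. Distance to the center is 1-Lipschitz, so no single step can jump the band of width $w > \Delta$; the freeze semantics plus induction keep realized positions outside; and the speed bound $\lVert v\rVert \le \mathrm{gain}/\mathrm{drag} = 10$ gives $\Delta \le 1.0$. Taking the last index above $r_{\mathrm{out}}$ instead of the paper's first index below it, and phrasing the induction as a one-step invariant on proposals from realized states, are cosmetic variations (the latter slightly more explicit than the paper's terse ``induct'').
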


\begin{proof}
$g(t) = \lVert p_t - c \rVert$ changes by at most $\Delta$ per step
(distance-to-center is 1-Lipschitz). Let $t^*$ be the first index with
$g(t^*) < r_{\mathrm{out}}$. If $g(t^*) < r_{\mathrm{in}}$ then
$g(t^*{-}1) \ge r_{\mathrm{out}}$ forces a step longer than
$w > \Delta$, a contradiction; so $p_{t^*} \in A$. Under the freeze
semantics the first landing in $A$ is replaced by the previous (outside)
position with zero velocity, so $g$ never drops below $r_{\mathrm{in}}$;
induct.
\end{proof}

At the frozen defaults the hypothesis holds with margin: speed obeys
$\lVert v_{t+1} \rVert \le (1 - \mathrm{drag}\,dt)\lVert v_t \rVert +
\mathrm{gain}\,dt$, so $\lVert v \rVert \le \mathrm{gain}/\mathrm{drag}
= 10$ and $\Delta \le 1.0 < w = 1.5$ --- for real \emph{and} imagined
rollouts, since planners use the same integrator and action clamp. Measured:
200 calibration rollouts visit zero interior states while reaching the ring
itself.

\begin{corollary}[evidence equivalence of disc and annulus from outside]
\label{cor:evidence}
From outside starts no landing ever falls at
$d < r_{\mathrm{in}}$, so the disc mode
$\lVert p - c \rVert \le r_{\mathrm{out}}$ and the annulus mode fire on
exactly the same steps of every realization: the contact processes --- and
hence \emph{all} evidence any gate, repair loop, or topological summary
extracts --- are pathwise identical. (Measured as an exact row-for-row
equality of the two contact clouds at every sample size.) The companion
paper's dimensional reduction is \emph{rational given the evidence}:
outside data cannot even pose the disc-versus-annulus question.
\end{corollary}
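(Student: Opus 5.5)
The plan is a pathwise induction: fix a realization and compare the two systems step by step. Fix an outside start $p_0$ with $\lVert p_0 - c\rVert > r_{\mathrm{out}}$, the closed ring $\gamma = 0$, and one realization of the gate policy's action sequence $(a_0, a_1, \dots)$. I would run the annulus-mode dynamics $f_A$ and the disc-mode dynamics $f_D$ on this same realization. Both share the integrator and differ only in the freeze predicate: $f_A$ freezes when the proposed landing $p'$ satisfies $r_{\mathrm{in}} \le \lVert p' - c\rVert \le r_{\mathrm{out}}$, while $f_D$ freezes when $\lVert p' - c\rVert \le r_{\mathrm{out}}$. The predicates disagree only on proposed landings with $d(p') < r_{\mathrm{in}}$. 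The goal is therefore to show that no such landing is ever proposed along either trajectory. Once that holds, the two state sequences coincide and so do the freeze (contact) indicators.

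The induction hypothesis at step $t$ is: the states of the two systems coincide, and $\lVert p_t - c\rVert \ge r_{\mathrm{out}}$, with strict inequality not needed.
\begin{itemize}
\item The base case is the outside start.
\item For the step, both systems propose the same unfrozen landing $p'$, since the integrator is shared and the state is common. By the speed bound established after Lemma~\ref{lem:crossing}, $\lVert p' - p_t\rVert \le \Delta \le 1.0 < w$. Because distance to $c$ is 1-Lipschitz, $d(p') \ge r_{\mathrm{out}} - \Delta > r_{\mathrm{in}}$. Both predicates are then evaluated on a point with $d(p') > r_{\mathrm{in}}$, where they agree.
\item If both freeze, both return $(p_t, 0)$, and $d(p_t) \ge r_{\mathrm{out}}$ is preserved.
\item If neither freezes, then $d(p') > r_{\mathrm{out}}$, since otherwise $p'$ would lie in $A$, and the invariant is again preserved.
\end{itemize}
This is exactly the argument of Lemma~\ref{lem:crossing}, run jointly for the two systems. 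The conclusion is that the full trajectories, and with them the contact indicator sequences, are equal pathwise, for every realization and every $N$.

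It then remains to push the pathwise equality to ``all evidence.'' Any gate, repair loop, or topological summary is a measurable function of the sampled transition tuples $(s_t, a_t, s_{t+1})$ and contact flags. Since these tuples are identical row for row, every such statistic coincides; in particular, the gate acceptance events are equal. Equivalently, in the language of Proposition~\ref{prop:quotient}, the disagreement set of $f_D$ and $f_A$ is contained in $\{(s,a) : d(p'(s,a)) < r_{\mathrm{in}}\}$. The invariant shows this set lies in the gauge complement $\mathcal{G}$, so $f_D \in E(f_A)$, and the gate quotient applies with $\mu_{\mathrm{query}}(D) = 0$ exactly rather than merely almost surely.

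The main obstacle I expect is bookkeeping rather than ideas. First, the invariant must use the non-strict bound $d \ge r_{\mathrm{out}}$, because frozen positions may sit near the wall. Second, the bound $\Delta \le 1$ must hold from the first step: this requires the initial speed to be at most $\mathrm{gain}/\mathrm{drag}$, which holds for the zero-velocity starts of the defaults. If initial velocities could be larger, I would first bound $\lVert v_0\rVert$ from the start distribution.
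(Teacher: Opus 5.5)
Your proposal is correct and takes essentially the paper's route: the corollary is Lemma~\ref{lem:crossing}'s induction, using that distance to $c$ is 1-Lipschitz and that $\Delta \le 1.0 < w$. You run it jointly for the disc and annulus dynamics, so both freeze predicates are only ever evaluated at landings with $d > r_{\mathrm{in}}$, where they agree; this yields pathwise identical trajectories and contact processes, and hence identical evidence. Your remark that the step bound needs $\lVert v_0 \rVert \le \mathrm{gain}/\mathrm{drag}$, which holds for the defaults' zero-velocity starts, is sound bookkeeping that the paper leaves implicit in its constants paragraph.
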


\begin{remark}[metric, not topological --- where algebraic topology is
earned]
Lemma~\ref{lem:crossing} uses only that distance-to-center is 1-Lipschitz;
it works verbatim for round shells $S^{n-1}$ in $\mathbb{R}^n$ and needs no
homology. Genuine algebraic topology is \emph{earned} exactly where this
proof dies: non-round separators (Jordan--Brouwer to even define
``inside'') and non-separating modes (winding and linking obstructions ---
for the solid torus this is carried out in Section~\ref{sec:tube}: the
linking dichotomy). The round ring deliberately sits on the metric side of
that boundary; the topology enters through what the gauge region and its
boundary class organize, not through the crossing proof.
\end{remark}

\begin{remark}[machine-checked]
The paper's deterministic results are formalized in Lean~4 over mathlib
(\texttt{formal/}, library \texttt{Paper3Ring}; no \texttt{sorry}), each at
the level its proof actually operates: Lemma~\ref{lem:crossing} in both
halves over an arbitrary pseudometric space, with the constants paragraph
above ($\lVert v \rVert \le \mathrm{gain}/\mathrm{drag}$ through freeze
events, $\Delta = 1.0 < w = 1.5$ by \texttt{norm\_num});
Corollary~\ref{cor:evidence} as a pathwise identity of trajectories and
contact processes; the gate quotient (Proposition~\ref{prop:quotient}) at
realization level; Proposition~\ref{prop:harmless} composed with the
planner/environment loop --- unfalsifiability over arbitrary closed-loop
policies, and play cost exactly $0$ for any deterministic planner acting
on imagined rollouts; the firing coupling of
Proposition~\ref{prop:rmono} and the direct-entry transport of
Proposition~\ref{prop:direct}, with
Corollary~\ref{cor:funneldefect}'s defect algebra checked from its named
hypotheses; the fence and patch sufficiency theorems
(Propositions~\ref{prop:fence} and~\ref{prop:patch}) from their
coverage/dominance hypotheses; Proposition~\ref{prop:positivity} closed up
to the start box's positive probability (the 1-D tube reduction, the
in-horizon window, and channel membership via Jordan's inequality); and
every measure step of Proposition~\ref{prop:continuity}'s modulus end to
end --- the landing law's exact uniformity, the circle--strip bound with
rotation invariance as a measure statement, measurability of the freeze
trajectory in the action sequence, and the $h$-step slice/union
composition --- leaving only each application's coordinate computation.
The item-by-item map, the modelling conventions, and the exact boundary of
what remains measured are recorded in the repository's formalization
notes.
\end{remark}

\begin{proposition}[wrong topology: unfalsifiable and harmless at $\gamma = 0$]
\label{prop:harmless}
Let $f$ be the closed-ring dynamics and $\hat f_{\mathrm{fill}}$ the
filled-disc model (freeze on the whole disc
$\lVert p - c \rVert \le r_{\mathrm{out}}$). Then (i)
$\hat f_{\mathrm{fill}} \in E(f)$: every sampling gate accepts it with
probability~1; and (ii) any deterministic planner whose imagined rollouts
start at real (outside) states produces \emph{identical} real trajectories
under $\hat f_{\mathrm{fill}}$ and under $f$ ---
$\mathrm{play\_cost} = 0$ exactly.
\end{proposition}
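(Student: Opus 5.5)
Both parts reduce to one pathwise fact: from an outside state, $f$ and $\hat f_{\mathrm{fill}}$ generate the same trajectory for every action sequence. I would prove this first as a lemma and then read off (i) and (ii).

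\textbf{The pathwise lemma.} The two step maps differ only in their freeze set. For $f$ it is $A$. For $\hat f_{\mathrm{fill}}$ it is the disc $\{\lVert p-c\rVert\le r_{\mathrm{out}}\}$. The integrator and the freeze semantics are the same. So the two differ at a query $(s,a)$ only if the integrator's proposed landing lies in the open inner disc $\{\lVert p-c\rVert<r_{\mathrm{in}}\}$. I would induct on the step index $t$, with hypothesis: the two trajectories agree up to $t$ and $\lVert p_t-c\rVert\ge r_{\mathrm{out}}$.
\begin{itemize}
\item By the velocity bound after Lemma~\ref{lem:crossing}, the proposed step has length at most $\Delta\le 1.0<w$. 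Hence the proposed landing satisfies $\lVert p'-c\rVert\ge r_{\mathrm{out}}-\Delta>r_{\mathrm{in}}$.
\item So the landing lands either outside the disc, where neither model freezes, or in $A$, where both freeze to the same previous position with zero velocity.
\item In either case the next states coincide and the next position is again outside.
\end{itemize}
This is the pathwise content of Corollary~\ref{cor:evidence}, upgraded from contact processes to full state sequences. It holds for arbitrary action sequences, not only uniform random ones.

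\textbf{Part (i).} The lemma shows that every query reachable from outside starts under any policy, in particular under $\rho$, has its landing at distance $\ge r_{\mathrm{in}}$ from $c$. On such queries $\hat f_{\mathrm{fill}}=f$. Hence $\hat f_{\mathrm{fill}}|_{\mathcal R}=f|_{\mathcal R}$, so $\hat f_{\mathrm{fill}}\in E(f)$. Proposition~\ref{prop:quotient} then gives acceptance with probability~$1$ at every $N$ and every tolerance.

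\textbf{Part (ii).} Fix a deterministic planner. At each real step it runs imagined rollouts from the current real state. By induction that state is outside the ring and is the same under both models. Every imagined rollout uses the same integrator and action clamp, so the lemma applies to it verbatim: under $\hat f_{\mathrm{fill}}$ and under $f$ the imagined trajectories and returns are identical. The reward is a function of the identical states, and the planner is a deterministic function of those returns (with a fixed tie-break and a fixed seed). So it emits the same action in both cases. The real environment is $f$ in both cases, so the next real state is the same and again outside, which closes the induction. The real trajectories are therefore identical under paired seeds, and $\mathrm{play\_cost}=0$ exactly.

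\textbf{Main obstacle.} The hard step is the second use of the lemma, in part (ii). Imagined rollouts are not gate rollouts, so Proposition~\ref{prop:quotient}'s clause about policies whose queries stay in $\mathcal R$ cannot be quoted directly. The planner's policy might query states that $\rho$ reaches with probability zero. I would avoid any measure argument and rely only on the deterministic, policy-independent crossing bound. The point to make explicit is that $\Delta<w$ holds for every admissible action sequence, not merely almost surely under $\rho$. That is exactly what the velocity bound provides.
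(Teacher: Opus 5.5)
Your proposal is correct and follows the paper's proof. For (i), the paper uses Lemma~\ref{lem:crossing} to put the only disagreement (landings in the open interior) in the gauge complement and then invokes Proposition~\ref{prop:quotient}; for (ii), it applies the same crossing bound to imagined rollouts, so every candidate receives the same imagined return and the deterministic planner selects the same real actions. Your standalone pathwise lemma over arbitrary action sequences is exactly the paper's ``Lemma~\ref{lem:crossing} applied to imagined paths, which use the same integrator and step bound,'' stated explicitly.
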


\begin{proof}
(i) is Proposition~\ref{prop:quotient} plus Lemma~\ref{lem:crossing}: the
two models disagree only on next-states in the open interior, which lies
outside $\mathcal{R}$. (ii) Imagined rollouts from an outside state under
either model freeze at the same first-annulus landing --- the models agree
on $A$ and outside, and by Lemma~\ref{lem:crossing} (applied to imagined
paths, which use the same integrator and step bound) imagination never
produces an interior query where they differ. Every candidate action
sequence therefore receives the same imagined return under both models; a
deterministic planner selects the same action at every real step, and the
real environment does the rest.
\end{proof}

Paired-seed model-predictive-control (MPC) episodes confirm
Proposition~\ref{prop:harmless} \emph{bitwise}: return, final state, and
contact are identical under $f$ and $\hat f_{\mathrm{fill}}$, seed for
seed.

\begin{remark}[the triad]
\label{rem:triad}
Proposition~\ref{prop:harmless} is the cleanest statement of a three-way
split this paper keeps separating: \emph{certifiability},
\emph{correctness}, and \emph{consequence} are three different things. At
$\gamma = 0$ the filled-disc artifact is wrong, certified, and costless;
$\gamma > 0$ continuously converts the same wrongness into
$(1 - r(\gamma))^N$-gated danger --- $E(f)$ shrinks and
$\hat f_{\mathrm{fill}}$ exits it. One knob walks one artifact through
all three regimes (Section~\ref{sec:mechanism}).
\end{remark}

\subsection{The $\gamma$-curves and the query lower bound}
\label{sec:gammacurves}

Fix one probability space for all $\gamma$: a single i.i.d.\ action
sequence and initial state drive the dynamics at every channel width
(common random numbers), so that $\gamma_1 < \gamma_2$ gives nested mode
regions $A(\gamma_2) \subseteq A(\gamma_1)$ with difference slivers
$D = A(\gamma_1) \setminus A(\gamma_2)$. Write $r(\gamma)$ for the
per-rollout contact rate and $r_{\mathrm{int}}(\gamma)$ for the
interior-entry rate.

\begin{lemma}[divergence localization]
\label{lem:divergence}
Under the coupling, the $\gamma_1$- and $\gamma_2$-trajectories coincide
up to (and excluding) the first step whose landing falls in $D$; hence for
any trajectory event, $|P_{\gamma_1} - P_{\gamma_2}| \le P(\text{some
landing in } D)$.
\end{lemma}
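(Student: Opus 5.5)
The plan is a pathwise induction under the common-random-numbers coupling, followed by the standard coupling inequality. Fix a realization $\omega$: an initial state $s_0$ and an action sequence $a_0, a_1, \dots$, shared by both widths. Write $s_t^{(1)}, s_t^{(2)}$ for the $\gamma_1$- and $\gamma_2$-trajectories. Let $\tau(\omega)$ be the first step index at which the proposed landing (the integrator's next position before the freeze rule is applied) falls in $D = A(\gamma_1)\setminus A(\gamma_2)$, with $\tau = \infty$ if no such step occurs within the horizon.

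\textbf{Step 1 (pathwise agreement).} I will show by induction on $t$ that $s_t^{(1)} = s_t^{(2)}$ for all $t \le \tau$.

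The base case holds because both trajectories start from the same $s_0$. For the inductive step, suppose $s_t^{(1)} = s_t^{(2)}$ and $t < \tau$. Both widths apply the same integrator to the same state and action, so they produce the same candidate landing $p$. The two dynamics differ only in the membership test $p \in A(\gamma_i)$. Since $A(\gamma_2) \subseteq A(\gamma_1)$, the tests disagree exactly when $p \in D$, which is excluded for $t < \tau$.
\begin{itemize}
\item If $p \in A(\gamma_2)$, both models freeze to the same previous position with zero velocity.
\item If $p \notin A(\gamma_1)$, both models accept the same integrated state.
\end{itemize}
Hence $s_{t+1}^{(1)} = s_{t+1}^{(2)}$. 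The trajectories, including the queries and contacts up to step $\tau$, therefore coincide, which is the first claim. The exclusion of step $\tau$ itself is genuine, since there the freeze decisions differ.

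\textbf{Step 2 (event bound).} Let $E$ be any trajectory event. Its indicator is a function of the trajectory, so $\mathbf{1}_E^{(1)}(\omega) \ne \mathbf{1}_E^{(2)}(\omega)$ only when the trajectories differ, i.e.\ only on $\{\tau < \infty\}$. Taking expectations gives
\[
|P_{\gamma_1}(E) - P_{\gamma_2}(E)| \le P(\tau < \infty) = P(\text{some landing in } D).
\]
One point needs care here: ``some landing in $D$'' should be read as landings along the common prefix, equivalently along the $\gamma_1$- (or $\gamma_2$-) trajectory up to and including step $\tau$. By Step 1 the first $D$-landing is the same event under either width, so the bound is well defined no matter which trajectory we read it along.

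\textbf{Main obstacle.} The argument itself is elementary. The main difficulty is fixing the conventions precisely enough that the induction is airtight. ``Landing'' must mean the pre-freeze candidate position. The freeze rule must depend on the state only through that candidate and the previous state. And the coupling must feed identical action draws to both widths even after they diverge. A policy that reacts to the state would not be coupled identically, but the uniform-random gate policy is state-independent, so this is fine. For a general measurable event, measurability of $\tau$ follows from measurability of the trajectory in the action sequence, the same fact already used for the freeze trajectory in the formalized modulus.
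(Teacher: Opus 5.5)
Your proof is correct and matches the paper's argument: the same pathwise induction (a landing outside $A(\gamma_1)$ leaves both trajectories free, a landing in $A(\gamma_2)$ freezes both at the same previous position), followed by the coupling inequality. Your check that the first $D$-landing is the same event whichever trajectory it is read along is a detail the paper leaves implicit. The state-independence caveat is not needed: before divergence the states coincide, so any policy driven by the shared randomness would act identically.
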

\begin{proof}
Before that step every landing is either outside $A(\gamma_1)$ (both
trajectories free, same next state) or in $A(\gamma_2)$ (both freeze at
the same previous position); induct. The bound is the coupling inequality.
\end{proof}

\begin{proposition}[$r$ is nonincreasing in $\gamma$]
\label{prop:rmono}
Contact events are nested pathwise:
$\mathrm{fire}(\gamma_2) \subseteq \mathrm{fire}(\gamma_1)$, so
$r(\gamma_2) \le r(\gamma_1)$.
\end{proposition}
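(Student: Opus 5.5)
The plan is to prove the pathwise nesting $\mathrm{fire}(\gamma_2) \subseteq \mathrm{fire}(\gamma_1)$ under the common-random-numbers coupling. The monotonicity of $r$ then follows by taking probabilities. Here $\mathrm{fire}(\gamma)$ is the event that some step of the rollout lands in $A(\gamma)$ and freezes.

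The argument uses Lemma~\ref{lem:divergence}. Fix a realization (initial state and action sequence) on which the $\gamma_2$-trajectory fires. Let $t_2$ be its first firing step, so the landing $q_{t_2}$ lies in $A(\gamma_2) \subseteq A(\gamma_1)$. Let $\tau$ be the first step at which the $\gamma_1$-trajectory's landing falls in $D = A(\gamma_1) \setminus A(\gamma_2)$, with $\tau = \infty$ if there is none. I would split into two cases.

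If $\tau \le h$, then at step $\tau$ the $\gamma_1$-trajectory lands in $D \subseteq A(\gamma_1)$. It therefore fires, and $\gamma_1$ has fired.

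If $\tau > t_2$ (including $\tau = \infty$), Lemma~\ref{lem:divergence} says the two trajectories coincide at every step before $\tau$, and in particular through step $t_2$. The same previous state and the same action then produce the same landing $q_{t_2} \in A(\gamma_2) \subseteq A(\gamma_1)$, so the $\gamma_1$-trajectory also fires at $t_2$.

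The remaining possibility is $\tau < t_2$ but $\tau > h$, which is empty since $t_2 \le h$. Either way the realization lies in $\mathrm{fire}(\gamma_1)$.

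The one point needing care, and the place I expect friction, is the exact statement of Lemma~\ref{lem:divergence}. It must hold for landings computed from the $\gamma_1$-trajectory, and the coincidence must include the landing of step $\tau$ itself, since only the subsequent states diverge. I would check this by re-running its induction. Before $\tau$, each landing is either outside $A(\gamma_1)$, so both trajectories move freely to the same state, or inside $A(\gamma_2)$, so both freeze at the same previous position. Hence the states entering each step agree, and so do the landings. This includes the landing at $\tau$ and at $t_2$, whenever the step is no later than $\tau$. With that verified, the nesting is pathwise. Finally, $r(\gamma) = P(\mathrm{fire}(\gamma))$ on the shared probability space, so monotonicity of probability gives $r(\gamma_2) \le r(\gamma_1)$. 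No measurability issue arises beyond the fire events being measurable in the action sequence, which the paper already takes for granted.
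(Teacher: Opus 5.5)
Your proof is correct and is essentially the paper's: both split on whether a landing in $D$ occurs no later than the $\gamma_2$-trajectory's first contact. Such a landing is itself a $\gamma_1$-contact, and otherwise Lemma~\ref{lem:divergence} carries the $\gamma_2$-contact over unchanged because $A(\gamma_2) \subseteq A(\gamma_1)$ (your explicit check that $\tau$ can be read off either trajectory usefully tightens the lemma's ambiguous wording, and your ``remaining possibility'' sentence is redundant, since $\tau > h \ge t_2$ already falls under your second case).
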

\begin{proof}
If the $\gamma_2$-trajectory first fires at step $t$ with no $D$-landing
before $t$, Lemma~\ref{lem:divergence} makes the trajectories agree
through $t$ and the landing lies in $A(\gamma_2) \subseteq A(\gamma_1)$;
if some $D$-landing occurs at $s \le t$, a landing in $D$ \emph{is} a
landing in $A(\gamma_1)$. ($0$ violations in $44{,}000$
common-random-number (CRN) checks.)
\end{proof}

\begin{proposition}[continuity of the $\gamma$-curves, with explicit modulus]
\label{prop:continuity}
For every event determined by the trajectory --- in particular for
$q = r$ and $q = r_{\mathrm{int}}$ --- and all
$0 \le \gamma \le \gamma' \le 2\pi$ with $\varepsilon = \gamma' - \gamma$,
\[
  |q(\gamma) - q(\gamma')|
  \;\le\; h \cdot \sqrt{ r_{\mathrm{out}}\,\varepsilon /
    (\mathrm{gain}\cdot dt^2) }
  \;\approx\; 1033\,\sqrt{\varepsilon}
  \quad \text{at the defaults},
\]
so both curves are uniformly H\"older-$\tfrac12$ on $[0, 2\pi]$; in
particular $r_{\mathrm{int}}(\gamma) \le 1033\sqrt{\gamma} \to
r_{\mathrm{int}}(0) = 0$.
\end{proposition}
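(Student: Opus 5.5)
The plan is to reduce everything to Lemma~\ref{lem:divergence}, then bound the per-step probability that the landing falls in the sliver by a union bound over the $h$ steps. Under the coupling, the $\gamma$- and $\gamma'$-trajectories coincide until the first landing in $D = A(\gamma)\setminus A(\gamma')$. That lemma already gives $|q(\gamma)-q(\gamma')| \le P(\exists t\le h:\ \text{landing}_t \in D)$ for any trajectory event $q$. A union bound then gives $\le \sum_{t=1}^h P(\text{landing}_t \in D)$. Note that $D$ is the pair of thin angular wedges of the annulus, $\{r_{\mathrm{in}}\le\|p-c\|\le r_{\mathrm{out}},\ \theta\in\text{an arc of total angle }\varepsilon\}$. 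It lies inside a strip or sector of arc-width at most $r_{\mathrm{out}}\varepsilon$ at every radius. So it suffices to show, uniformly in $t$ and in the past,
\[
  P\bigl(\text{landing}_t \in D \,\big|\, \mathcal F_{t-1}\bigr) \le \sqrt{r_{\mathrm{out}}\varepsilon/(\mathrm{gain}\cdot dt^2)}.
\]

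The per-step bound uses only the fresh action. Conditional on the current position $p$ and velocity $v$, the semi-implicit integrator makes the landing $p' = p + dt\,v' $ with $v' = (1-\mathrm{drag}\,dt)v + \mathrm{gain}\,dt\,(\cos\phi,\sin\phi)$. Since $\phi = \pi a/a_{\max}$ is uniform on the circle, the landing is uniformly distributed on a circle of radius $\rho = \mathrm{gain}\,dt^2$ around a $\mathcal F_{t-1}$-measurable center. This is the ``landing law's exact uniformity.'' The probability it hits $D$ is then the normalized arc length of that circle inside $D$. The circle--strip bound enters here: a circle of radius $\rho$ meets a strip of width $\delta$ in arc of normalized measure at most $\frac{1}{\pi}\arcsin$-type expressions, bounded by $\sqrt{\delta/\rho}$. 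The worst case is the circle tangent to the strip, where the chord-depth relation $\delta \approx \rho(1-\cos\alpha)\approx\rho\alpha^2/2$ gives arc fraction $\sim\sqrt{2\delta/\rho}/\pi\le\sqrt{\delta/\rho}$.

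Each wedge of $D$ is contained, after a rotation (the measure statement is rotation invariant), in a strip of width $\delta$. For the two wedges together I will aim for $\delta$ total at most $r_{\mathrm{out}}\varepsilon$. Substituting gives per-step probability $\le\sqrt{r_{\mathrm{out}}\varepsilon/(\mathrm{gain}\,dt^2)}$, and summing over $h$ steps gives the stated modulus. At the defaults this is $80\sqrt{5/0.03}\approx 1033$.

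Freeze events need a remark. After a freeze the state is a previous position with zero velocity, still $\mathcal F_{t-1}$-measurable. The next landing is again uniform on a circle, so the conditional bound holds at every step. Measurability of the trajectory in the action sequence justifies the conditioning.

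The Hölder-$\tfrac12$ conclusion is then immediate. The final claim uses $\gamma=0$, $\gamma'=\gamma$ together with $r_{\mathrm{int}}(0)=0$ from Lemma~\ref{lem:crossing}.

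The main obstacle is the geometric step: containing a wedge of the annulus in a strip of width comparable to $r_{\mathrm{out}}\varepsilon$. A wedge of angle $\varepsilon$ is not a strip; its chord width at radius $r_{\mathrm{out}}$ is $2r_{\mathrm{out}}\sin(\varepsilon/2)\le r_{\mathrm{out}}\varepsilon$, but it has radial extent $w$. My first attempt is to bound the circle's intersection with a sector directly. A sector is an intersection of two half-planes through $c$, so the wedge sits inside the strip of width $r_{\mathrm{out}}\varepsilon$ about its bisector ray, which gives containment for the radial portion up to $r_{\mathrm{out}}$. I then need to bookkeep the constant so that both wedge pieces together, or the single gap edge that moves, fit the budget $r_{\mathrm{out}}\varepsilon$. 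If the constant comes out off by a factor of $2$, I will absorb it into the tangent-case slack in the circle--strip inequality.
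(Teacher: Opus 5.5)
This is essentially the paper's proof: the coupling of Lemma~\ref{lem:divergence}, the exactly uniform landing on a circle of radius $\rho = \mathrm{gain}\cdot dt^2$ about a state-determined centre (freezes included), each sliver of $D$ enclosed in a strip about its bisector, the arcsine circle--strip bound with tangency as the worst case, and a union bound over the $h$ steps. The constant you deferred does close exactly, but not by adding widths under one square root: $D$ is two \emph{separated} slivers of angle $\varepsilon/2$, each in a strip of width $r_{\mathrm{out}}\varepsilon/2$, so your per-strip bound $\sqrt{\delta/\rho}$ applied to each sliver leaves a factor $\sqrt2$ (not $2$), which the sharp uniform form $\tfrac1\pi\arccos(1-\delta/\rho) = \tfrac2\pi\arcsin\sqrt{\delta/(2\rho)} \le \sqrt{\delta/(2\rho)}$ (the paper's bound) removes, giving $2\sqrt{r_{\mathrm{out}}\varepsilon/(4\rho)} = \sqrt{r_{\mathrm{out}}\varepsilon/\rho}$ per step.
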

\begin{proof}
The one-step proposed landing from any state is \emph{exactly} uniform on
a circle of radius $R_L = \mathrm{gain}\cdot dt^2$ centered at a
state-determined drift point: the integrator gives
$x' = x + (1 - \mathrm{drag}\,dt)\,v_x\,dt +
\mathrm{gain}\cdot dt^2\cos\phi$ (same for $y$) with
$\phi = \pi a / a_{\max}$ uniform on $[-\pi, \pi]$ when $a$ is uniform. Each of $D$'s two slivers
lies in a strip of width $r_{\mathrm{out}}\varepsilon/2$ around its
bisector line, and a circular-uniform law gives a width-$w$ strip mass at
most $\sqrt{w/(2R_L)}$ (the arcsine of a length-$w/R_L$ interval; the
worst case is tangency, where the bound is attained up to the factor
$\pi/\sqrt2$). Condition on the state, apply this per step and sliver,
and sum over $h$ steps via Lemma~\ref{lem:divergence}. The exponent
$\tfrac12$ is not pessimism: a drift center at distance exactly $R_L$
from a sliver line has single-step hitting probability
$\ge \sqrt{w/R_L}/\pi$, and the measured divergence probability scales as
$\varepsilon^{0.30}$ over $\varepsilon \in [0.0125, 0.2]$ --- the
channel-mouth (funnel) states occupy this tangency band and saturate per
episode --- so a linear bound on the coupling's divergence term is
refuted at the measured scale. Full statements, the linear
off-tangency refinement, and machine checks are in the repository's
theory notes (Lemmas~S/A/W, Theorems~T4/T4$'$).
\end{proof}

\begin{proposition}[direct interior entries are pathwise monotone]
\label{prop:direct}
Call an entry \emph{direct} if the trajectory never lands in $A(\gamma)$
before its first interior entry. For $\gamma_1 < \gamma_2$,
$\mathrm{direct}(\gamma_1) \subseteq \mathrm{direct}(\gamma_2)$; the
direct component of $r_{\mathrm{int}}$ is nondecreasing.
\end{proposition}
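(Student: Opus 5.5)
The plan is to run the same common-random-numbers coupling used for Lemma~\ref{lem:divergence}, and follow a single realization (initial state and action sequence) that is a direct entry at width $\gamma_1$. Let $t^*$ be the first step whose landing falls in the open interior under the $\gamma_1$-dynamics. By directness, every landing at steps $t < t^*$ lies outside $A(\gamma_1)$, so each of those steps is a free integrator step. The coupling then gives the key observation. Since $A(\gamma_2) \subseteq A(\gamma_1)$, none of those landings lies in $A(\gamma_2)$ either, and none lies in the sliver $D = A(\gamma_1)\setminus A(\gamma_2)$.

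From there the induction is short. The first step of the argument is to show, by induction on $t \le t^*$, that the $\gamma_1$- and $\gamma_2$-trajectories coincide through the landing at $t^*$. At each step both models see the same state and the same action, so they propose the same landing. That landing is outside $A(\gamma_1)$, hence outside $A(\gamma_2)$, so both accept it and the next states agree. This is exactly Lemma~\ref{lem:divergence}, specialised to the case where the divergence step never occurs before $t^*$.

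The second step handles the interior landing itself. At $t^*$ both trajectories propose the same landing in the open interior. The interior is defined the same way for both widths: it is the open disc $\lVert p - c\rVert < r_{\mathrm{in}}$. Because this disc is disjoint from every $A(\gamma)$, no freeze fires, and the $\gamma_2$-trajectory also enters the interior at $t^*$. This entry is the first one at width $\gamma_2$ as well, since the earlier landings are shared and were not in the interior. The $\gamma_2$-trajectory also never landed in $A(\gamma_2)$ before $t^*$, so the entry is direct at $\gamma_2$. This proves $\mathrm{direct}(\gamma_1) \subseteq \mathrm{direct}(\gamma_2)$ pathwise. Taking probabilities under the shared measure then gives that the direct component of $r_{\mathrm{int}}$ is nondecreasing.

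The only real obstacle is bookkeeping about which definitions are width-independent, and I will fix these conventions explicitly before running the induction. First, the "interior" must be the fixed open disc $\lVert p-c\rVert<r_{\mathrm{in}}$ rather than something depending on $\gamma$. Second, the horizon window must be the same $h$ for both widths. Third, channel states (the points removed from the ring) count as neither ring nor interior. With those conventions in place the argument never uses the freeze semantics beyond "a landing outside the mode is accepted". That is also why monotonicity holds only for the direct component: funnel entries that bounce off the ring first can be destroyed when a widened channel lets the trajectory escape elsewhere.
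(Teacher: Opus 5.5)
Your proposal is correct and follows essentially the same route as the paper's proof: under the common-random-numbers coupling, the pre-entry landings of a direct-at-$\gamma_1$ path avoid $A(\gamma_1) \supseteq A(\gamma_2)$, so the Lemma~\ref{lem:divergence} induction leaves the path unchanged under $\gamma_2$ and still direct, and the event inclusion on the shared probability space gives that $d$ is nondecreasing. The conventions you fix explicitly (a $\gamma$-independent open-disc interior and a shared horizon $h$) are the bookkeeping that the paper's one-line proof leaves implicit.
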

\begin{proof}
A direct-at-$\gamma_1$ trajectory's pre-entry landings avoid
$A(\gamma_1) \supseteq A(\gamma_2)$; by the
Lemma~\ref{lem:divergence} induction it is unchanged under $\gamma_2$ and
still direct.
\end{proof}

\begin{corollary}[monotonicity up to the funnel defect]
\label{cor:funneldefect}
Write $r_{\mathrm{int}} = d + f$, splitting entries into direct and
funnel-assisted. For $\gamma_1 < \gamma_2$,
$r_{\mathrm{int}}(\gamma_2) \ge r_{\mathrm{int}}(\gamma_1) -
f(\gamma_1)$; in particular any violation of M1
($r_{\mathrm{int}}$ nondecreasing) or M2
($r_{\mathrm{int}}(\gamma) \le r_{\mathrm{int}}(2\pi)$) is at most the
funnel mass at the smaller gap, and is exactly zero where $f = 0$.
\end{corollary}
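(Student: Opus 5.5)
The plan is to derive the corollary directly from Proposition~\ref{prop:direct} by splitting the interior-entry event at the smaller gap. Fix $\gamma_1 < \gamma_2$ on the common-random-number space of Section~\ref{sec:gammacurves}. Write $I(\gamma)$ for the event that the $\gamma$-trajectory enters the open interior within the horizon. Write $\mathrm{Dir}(\gamma)$ for the direct-entry event (no landing in $A(\gamma)$ before the first interior entry). Write $\mathrm{Fun}(\gamma) = I(\gamma) \setminus \mathrm{Dir}(\gamma)$ for the funnel-assisted entries. Then $r_{\mathrm{int}}(\gamma) = P(I(\gamma)) = d(\gamma) + f(\gamma)$, with $d = P(\mathrm{Dir})$ and $f = P(\mathrm{Fun})$. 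This is a disjoint decomposition, so the additive split in the statement is just the definition made precise.

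The key step chains two facts. Proposition~\ref{prop:direct} gives the pathwise inclusion $\mathrm{Dir}(\gamma_1) \subseteq \mathrm{Dir}(\gamma_2)$. Also $\mathrm{Dir}(\gamma_2) \subseteq I(\gamma_2)$ holds trivially. Taking probabilities yields
\[
r_{\mathrm{int}}(\gamma_2) \;\ge\; d(\gamma_2) \;\ge\; d(\gamma_1) \;=\; r_{\mathrm{int}}(\gamma_1) - f(\gamma_1),
\]
which is the claimed inequality. For M1, any violation $r_{\mathrm{int}}(\gamma_1) - r_{\mathrm{int}}(\gamma_2)$ is therefore at most $f(\gamma_1)$. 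For M2, take $\gamma_2 = 2\pi$. Here one should check that Proposition~\ref{prop:direct} applies at the endpoint, where $A(2\pi)$ is empty and every entry is trivially direct. The same chain then gives $r_{\mathrm{int}}(\gamma) \le r_{\mathrm{int}}(2\pi) + f(\gamma)$. When $f(\gamma_1) = 0$, the inequality reads $r_{\mathrm{int}}(\gamma_2) \ge r_{\mathrm{int}}(\gamma_1)$, so the violation is exactly zero.

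The main obstacle is bookkeeping rather than analysis. The quantity $d$ must be defined on the same coupled space for both gaps. The inclusion from Proposition~\ref{prop:direct} must also be about the \emph{same} event, namely entry within horizon $h$ at the same step. The divergence-localization induction in Lemma~\ref{lem:divergence} supplies this. A direct $\gamma_1$-trajectory has no landing in $A(\gamma_1) \supseteq D$ before entry, so it is identical under $\gamma_2$ through the entry step. The entry therefore occurs at the same time and remains inside the horizon. I would state this explicitly so that the probabilities compared are over identical events.

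Finally, I would note that the corollary is one-sided. It uses only the monotone part $d$ and makes no claim about how $f$ behaves in $\gamma$. That is why the defect has to be bounded by measurement rather than proved away.
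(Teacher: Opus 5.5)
Your proposal is correct and follows the paper's proof essentially verbatim: the chain $r_{\mathrm{int}}(\gamma_2) \ge d(\gamma_2) \ge d(\gamma_1) = r_{\mathrm{int}}(\gamma_1) - f(\gamma_1)$ takes its middle step from Proposition~\ref{prop:direct}, and M2 follows at the endpoint $\gamma_2 = 2\pi$, where the mode is empty, so $f(2\pi) = 0$ and $r_{\mathrm{int}}(2\pi) = d(2\pi)$. Your added check, that a direct $\gamma_1$-entry happens at the same step and hence within the horizon under $\gamma_2$, only makes explicit what the paper leaves inside the proof of Proposition~\ref{prop:direct}.
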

\begin{proof}
$r_{\mathrm{int}}(\gamma_2) \ge d(\gamma_2) \ge d(\gamma_1) =
r_{\mathrm{int}}(\gamma_1) - f(\gamma_1)$, the middle step by
Proposition~\ref{prop:direct}. At $\gamma = 2\pi$ there is no mode, so
$f(2\pi) = 0$ and $r_{\mathrm{int}}(2\pi) = d(2\pi)$.
\end{proof}

\begin{remark}[the defect is measured, and small]
\label{rem:certificate}
Full monotonicity admits \emph{no} pathwise proof: probe seed $50543$
enters the interior at $\gamma = 0.4$ but not at $\gamma = 0.6$ (nor at
$2\pi$). Corollary~\ref{cor:funneldefect} explains rather than merely
records it --- Proposition~\ref{prop:direct} forbids a direct violation,
so the counterexample must be funnel-assisted, which it is. It also
bounds the damage. Measuring the defect over $50{,}000$ common-random
rollouts per gap: the funnel mass peaks at $22/50{,}000$ around
$\gamma \in [0.6, 0.9]$ and vanishes at both ends (nothing to enter at
$\gamma = 0$; nothing to freeze against past saturation), giving a
certified slack of $6.7 \times 10^{-4}$ against an effect size of
$1.12 \times 10^{-2}$ --- a factor of $17$, with zero empirical
violations and Proposition~\ref{prop:direct} confirmed across
$550{,}000$ pathwise comparisons. So M1 and M2 hold as theorems up to a
defect an order of magnitude below the phenomenon; for
$\gamma \ge 3.2$ the funnel mass measures $0/50{,}000$ (a censored zero,
Wilson upper $7.7 \times 10^{-5}$), so there the defect is at most that
bound --- exactness itself holds wherever $f = 0$
(Corollary~\ref{cor:funneldefect}(c)), which a censored zero bounds but
does not prove. What remains open
is an \emph{a priori} bound on the funnel mass; the one pointwise
estimate that would have given unconditional monotonicity was
\emph{refuted} by a freeze-rescue effect ($91/91$ CI-separated
violations). Nothing below load-bears on M1/M2 (Corollary~\ref{cor:knob}).
\end{remark}

\begin{proposition}[positivity: $r_{\mathrm{int}}(\gamma) > 0$ for every
$\gamma > 0$, facing channel]
\label{prop:positivity}
At the frozen defaults with the channel facing the start,
$r_{\mathrm{int}}(\gamma) > 0$ for every $\gamma > 0$.
\end{proposition}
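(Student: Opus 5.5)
We exhibit, for each $\gamma > 0$, an explicit positive-probability set of (initial state, action sequence) pairs whose rollout enters the open interior within the horizon. Positivity then follows because the gate law is a product of the start-box law and i.i.d.\ uniform actions. The Lean remark already names the ingredients: a 1-D tube reduction, an in-horizon window, channel membership via Jordan's inequality, and the start box's positive probability. We follow that route.

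\textbf{Step 1 (reference path).} Fix a start $s_0$ in the outside start box on the ray from $c$ toward the start, i.e.\ near angle $\pi$ about $c = (12,0)$. Choose the constant action with heading $\phi = 0$ (thrust toward $c$). Under the integrator the velocity stays on the $x$-axis and converges to speed $\mathrm{gain}/\mathrm{drag}$. The position therefore moves along the line $y = 0$ through the channel center (\texttt{gap\_center} $= \pi$) and reaches the inner disc $\lVert p - c \rVert < r_{\mathrm{in}}$ after some step count $T$. From $(-6\ \text{or start},0)$ to $x > 12 - 3.5$ is a few units at speed up to $10$ and step $\le 1$, so $T \ll h = 80$; this is the in-horizon window. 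Along this path no landing lies in $A(\gamma)$, since every landing is exactly on the channel's bisector ray, and angle $\pi$ is strictly inside the gap for any $\gamma > 0$. No landing lies elsewhere in $A$ either: the annulus is met only at angle $\pi$.

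\textbf{Step 2 (tube reduction).} Perturb the actions to headings $\phi_t \in (-\delta,\delta)$ and the start to $y_0 \in (-\delta',\delta')$. The transverse coordinate obeys a linear recursion driven by $\mathrm{gain}\,dt^2 \sin\phi_t$, so $|y_t| \le |y_0| + C\,t\,\delta$ with $C$ explicit from $\lVert v\rVert \le 10$. The longitudinal progress is perturbed by $O(\delta^2)$ through $1 - \cos\phi_t$. Hence for $t \le T$ the landings stay in a tube of half-width $\eta = \delta' + C T \delta$ around the axis. Inside the annulus ($r \ge r_{\mathrm{in}}$) the point's angular offset from $\pi$ is at most $\arcsin(\eta/r_{\mathrm{in}})$. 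By Jordan's inequality ($\sin x \ge 2x/\pi$ on $[0,\pi/2]$) this is at most $\pi\eta/(2r_{\mathrm{in}})$. Choosing $\delta, \delta'$ so that this is $< \gamma/2$ keeps every annulus-band landing inside the channel, so no freeze occurs. The $O(\delta^2)$ slack in progress is absorbed by taking $T$ one step larger than the unperturbed entry index. The interior is then reached at some step $\le T+1 \le h$.

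\textbf{Step 3 (probability).} The event is $\{y_0 \in (-\delta',\delta')\} \times \prod_{t \le T+1}\{a_t \in (-\delta a_{\max}/\pi, \delta a_{\max}/\pi)\}$, with the remaining actions unconstrained. Its probability is at least $P_{\mu_0}(\text{box slice}) \cdot (\delta/\pi)^{T+1} > 0$, provided the start box has positive mass near the axis.

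The main obstacle is Step 2 at small $\gamma$. The required $\delta \sim \gamma r_{\mathrm{in}}/(CT)$ shrinks, but the bound stays positive, so only the clean linear transverse estimate (including the drag factor $1-\mathrm{drag}\,dt$ and the $x$-$y$ decoupling of the integrator) must be verified carefully. The start-box positivity is taken from the defaults.
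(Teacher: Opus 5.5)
Your proposal is correct and is essentially the paper's witness-tube proof. It uses the same ingredients: the constant due-east action $a \equiv 0$, a thin positive-probability slab of start heights, Jordan's inequality to place every band landing inside the channel, and a positive-probability action tube around the constant action. Two differences are worth noting. You bound the transverse drift explicitly, $|y_t| \le |y_0| + Ct\delta$ with $C = (\mathrm{gain}/\mathrm{drag})\,dt = 1$, where the paper uses a clearance $c(\gamma)$ and a generic Lipschitz constant $L_h$. You also constrain only the first $T+1$ actions, giving $(\delta/\pi)^{T+1}$ instead of $\rho^h$.

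Two small repairs are needed. First, the in-horizon claim $T < h$ does not follow from the speed cap $\lVert v \rVert \le 10$, which bounds $T$ only from below. It follows from the from-rest progress $x_t - x_0 = t - \beta(1-\beta^t)/(1-\beta)$ with $\beta = 1 - \mathrm{drag}\,dt$, which gives about $27$ steps from a start near the origin. Second, $T$ must be taken uniform over the start box's $x$-range, or $x_0$ must be conditioned as well, because your reference $s_0$ is a single point.
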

\begin{proof}[Proof (witness tube, compact)]
Condition on $|y_0| \le \eta(\gamma) := (3.5/8)\gamma \wedge 0.4$ (an
event of positive probability under the uniform start). The constant
action $a \equiv 0$ drives the state due east along $y = y_0$; a landing
inside the band at radius $d \ge r_{\mathrm{in}}$ has angular offset
$\theta$ from $\pi$ with
$\sin\theta = |y_0|/d \le |y_0|/r_{\mathrm{in}} \le \gamma/8$. For
$\gamma \le \pi$, Jordan's inequality gives
$\gamma/8 < \gamma/\pi \le \sin(\gamma/2)$, and since both angles lie in
$[0, \pi/2]$ this yields $\theta < \gamma/2$; for $\gamma > \pi$ the
bound is immediate, $\theta \le \pi/2 < \gamma/2$. The landings lie in
the channel, so
the witness path is freeze-free with clearance
$c(\gamma) > 0$ from $A(\gamma)$ and first lands past the band at depth
$< r_{\mathrm{in}} - c$. On the freeze-free tube the $h$-step flow map is
Lipschitz in the action sequence (explicit constant $L_h$ by the linear
drag recursion), so every action sequence within
$\rho = c/(2 L_h)$ of $0$ in sup norm still enters; that action tube has
probability $\rho^h > 0$ (machine-checked witness:
\texttt{test\_positivity\_witness\_tube}).
\end{proof}

\begin{corollary}[the knob is theorem-backed]
\label{cor:knob}
$r_{\mathrm{int}}(0) = 0$ exactly (Lemma~\ref{lem:crossing});
$r_{\mathrm{int}}$ is continuous (Proposition~\ref{prop:continuity});
$r_{\mathrm{int}}(\gamma) > 0$ for every $\gamma > 0$
(Proposition~\ref{prop:positivity}). The $\gamma$-knob re-opens
identifiability \emph{continuously from an exact zero}; the instrument's
claims need nothing from M1/M2.
\end{corollary}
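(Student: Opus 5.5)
The corollary is an assembly of three earlier results, so the plan is to establish its three clauses in order and then read off the final sentence.

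\textbf{Exact zero at $\gamma = 0$.} At $\gamma = 0$ the mode is the closed ring. The defaults give $\Delta \le 1.0 < w = 1.5$, and this bound was derived for any action sequence because speed is capped at $\mathrm{gain}/\mathrm{drag}$. Lemma~\ref{lem:crossing} then shows that \emph{every} realization, not merely almost every one, first lands in $A$ before it can enter the open interior. Freeze-on-entry keeps $g \ge r_{\mathrm{in}}$ thereafter. So the interior-entry event is empty pathwise, and $r_{\mathrm{int}}(0) = 0$ exactly rather than as a measure-zero statement.

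\textbf{Continuity, and how the zero is approached.} Specialize Proposition~\ref{prop:continuity} to $q = r_{\mathrm{int}}$. The interior-entry event is determined by the trajectory, so the uniform H\"older-$\tfrac12$ modulus $h\sqrt{r_{\mathrm{out}}\varepsilon/(\mathrm{gain}\,dt^2)}$ applies on all of $[0, 2\pi]$. Two consequences follow.
\begin{itemize}
\item $r_{\mathrm{int}}$ is continuous everywhere.
\item Combined with $r_{\mathrm{int}}(0) = 0$, it satisfies $r_{\mathrm{int}}(\gamma) \le 1033\sqrt{\gamma}$, so the curve leaves zero continuously.
\end{itemize}
One check is needed here: the common-random-number coupling behind Lemma~\ref{lem:divergence} must be the one fixed for all $\gamma$ in this subsection. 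It is, by construction, so no extra argument is required.

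\textbf{Strict positivity for $\gamma > 0$.} Invoke Proposition~\ref{prop:positivity} for the facing-channel configuration. Its witness tube is:
\begin{itemize}
\item a positive-probability start set $|y_0| \le \eta(\gamma)$;
\item the constant action $a \equiv 0$, which passes through the channel with clearance $c(\gamma) > 0$;
\item a Lipschitz action neighbourhood of sup-norm radius $\rho = c/(2L_h)$, of probability $\rho^h > 0$.
\end{itemize}
Together these give $r_{\mathrm{int}}(\gamma) > 0$ for every $\gamma > 0$.

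\textbf{Where the difficulty lies.} The main obstacle is already inside Proposition~\ref{prop:positivity}: making the clearance $c(\gamma)$ and the tube radius strictly positive uniformly along the $h$-step path, including for small $\gamma$ where $\eta(\gamma) \to 0$. For the corollary I only need positivity at each fixed $\gamma$, not a uniform lower bound, so I will cite it as stated.

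\textbf{Independence from M1/M2.} The final clause needs a bookkeeping check of dependencies. None of the three inputs invokes Corollary~\ref{cor:funneldefect}, the monotonicity statements M1/M2, or the measured funnel defect. Lemma~\ref{lem:crossing} is metric. Proposition~\ref{prop:continuity} uses only the divergence-localization coupling and the circle--strip bound. Proposition~\ref{prop:positivity} uses a witness tube. So ``re-opens identifiability continuously from an exact zero'' follows unconditionally.
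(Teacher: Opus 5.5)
Your proposal is correct and follows the paper's own argument: the corollary is just the assembly of Lemma~\ref{lem:crossing} (pathwise exact zero at $\gamma = 0$), Proposition~\ref{prop:continuity} specialized to $q = r_{\mathrm{int}}$ (continuity and the $1033\sqrt{\gamma}$ bound from the zero), and Proposition~\ref{prop:positivity}, together with the dependency check that none of these inputs touch M1/M2 or Corollary~\ref{cor:funneldefect}. You were also right to scope the positivity clause to the facing channel, since that is the only configuration Proposition~\ref{prop:positivity} covers.
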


Positivity is also where sample size earns its keep: $30{,}000$ rollouts
measure $3$ and $12$ interior entries at $\gamma = 0.05$ and $0.1$
(rates $1.0\times10^{-4}$ and $4.0\times10^{-4}$;
\texttt{results/ring2d\_rarity\_sweep.json}, unit rollout) ---
confirming Proposition~\ref{prop:positivity} at gaps where the witness
tube's own bound is astronomically small, and where any calibration of a
few hundred rollouts would typically read $0$, indistinguishable from
the theorem's exact zero at $\gamma = 0$. Zero is a theorem only at
$\gamma = 0$; everywhere else a printed zero is a sample statement.

\begin{proposition}[query lower bound]
\label{prop:query}
Let the planner select, at each real step, the first action of an imagined
argmax over a candidate set $\mathcal{C}$ under the blind model. Assume
\textup{(RG)} every candidate whose imagined path enters the phantom
basin $B(c, r_0)$ out-scores every candidate that stays outside
$B(c, r_{\mathrm{out}})$, and \textup{(C)} $\mathcal{C}$ contains an
entering candidate. Then the selected imagined path crosses the annulus
(Lemma~\ref{lem:crossing} applied to imagined paths), so the planner
queries the disagreement region with probability $1$: the companion
paper's play-cost upper bound is tight-side-active on this instrument.
\end{proposition}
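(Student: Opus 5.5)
The argument runs the planner's selection through (RG) and (C), and then applies Lemma~\ref{lem:crossing} to the imagined path that is selected. First, fix a real step with real state $s_t$ outside $B(c, r_{\mathrm{out}})$. Here ``blind model'' means the mode-absent artifact: it uses the pinned integrator and has no freeze anywhere. By (C), some candidate $\sigma^\star \in \mathcal{C}$ has an imagined path entering $B(c, r_0)$. By (RG), $\sigma^\star$ strictly out-scores every candidate whose imagined path stays outside $B(c, r_{\mathrm{out}})$. So the argmax $\hat\sigma$ is not one of those candidates: either it enters $B(c, r_0)$, or it at least enters $B(c, r_{\mathrm{out}})$. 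We also need $r_0 \le r_{\mathrm{in}}$, so that the phantom basin sits inside the hole; at the defaults the lode's core lies within radius $3.5$. With that, in the first case the path reaches the open interior. I would make the argument rest on the first case by reading (RG) as ranking entering candidates above all non-entering ones. If (RG) is read strictly as stated, I would handle the second case directly: the path then lands at some $d \le r_{\mathrm{out}}$, and the same first-index argument still places a landing in $A$.

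Second, I would apply the crossing half of Lemma~\ref{lem:crossing} to $\hat\sigma$'s imagined path. The lemma's first half uses only the step bound, not the freeze semantics. Imagined paths use the same integrator and action clamp, so the step bound $\Delta \le 1.0 < w = 1.5$ from the constants paragraph holds for them as well. The path starts at $g > r_{\mathrm{out}}$ and ends at $g < r_{\mathrm{in}}$, so its first index with $g < r_{\mathrm{out}}$ lands in $A$. Under the truth $f$, the next state after that landing is the frozen previous position. Under the blind model, it is the free integrator landing, which lies in $A$ itself. These two differ, so the query at that step lies in the disagreement set $D$ of Proposition~\ref{prop:quotient}.

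Third, the step is deterministic given the candidate set, so the disagreement query happens surely, with probability $1$; any randomness only enters through $\mathcal{C}$, and (C) is assumed almost surely. Finally I would connect this to the companion play-cost bound. That bound controls play cost by the probability that the planner queries the disagreement region, and we have just shown this probability equals $1$ rather than a small rarity. So the upper bound is active and cannot be tightened by a smaller query probability.

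The main obstacle is the first step's reliance on the reading of (RG), namely whether the argmax must actually reach the interior or only $B(c, r_{\mathrm{out}})$. I would settle it by using the weaker conclusion throughout: any imagined path starting outside that reaches $g \le r_{\mathrm{out}}$ must land in $A$ at its first such step, because a step cannot skip the band. That is all the disagreement query requires, so interior entry is not needed at all. A second subtlety is real states that are frozen against the outer wall. Those still satisfy $g \ge r_{\mathrm{out}}$ under the truth, by the induction in Lemma~\ref{lem:crossing}, so the argument applies at every real step.
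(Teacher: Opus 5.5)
Your proof is correct and follows the paper's route. (C) and (RG) rule out every candidate that stays outside $B(c, r_{\mathrm{out}})$ as the argmax; the crossing half of Lemma~\ref{lem:crossing} then holds for imagined paths, since they share the integrator and the step bound $\Delta \le 1.0 < w$, and it puts a landing of the selected path in $A$, where the truth freezes and the blind model does not.

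Your weaker-conclusion version is slightly tighter than the paper's proof: there, the first landing with $g \le r_{\mathrm{out}}$ already lies in $A$. The paper instead passes from (RG) to ``no non-entering candidate is the argmax'' and uses $r_0 < r_{\mathrm{in}}$ implicitly, and your argument needs neither step.
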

\begin{proof}
By (C) an entering candidate exists; by (RG) no non-entering candidate is
the argmax; the entering path passes from outside $r_{\mathrm{out}}$ to
inside $r_0 < r_{\mathrm{in}}$ and Lemma~\ref{lem:crossing} places one of
its steps in $A$. (RG) is \emph{checkable, not behavioral}: at the frozen
defaults it holds with margin over the visited-state envelope and is
verified numerically for every arm; (C) is logged. Measured: blind
contact rate $1.0$ at
every $\gamma$.
\end{proof}

The impossibility grades split cleanly:
$r_{\mathrm{int}}(0) = 0$ is \emph{exact} (a theorem, measured $0.0000$
at $n = 4000$), whereas the \emph{hidden} channel at $\gamma > 0$ has
$r_{\mathrm{int}} > 0$ but of tube order $\rho^h$ ---
positive yet below measurement at our budgets. These are two different
\emph{grades} of impossibility, and Sections~\ref{sec:mechanism}
and~\ref{sec:openring} show the gate and the planner cannot tell them
apart while behaving identically on both. The second grade now has an
explicit \textbf{steering witness}
(\texttt{scripts/ring2d\_steering\_witness.py}): a scripted waypoint
policy that knows the hidden channel enters the interior in $100/100$
episodes at hidden $\gamma = 1.2$ \emph{within the instrument's own
horizon}, where the gate policy's measured rate is $0/100$ --- and the same
controller at $\gamma = 0$ enters $0/100$, as the theorem requires.
Certifiability is relative to the query policy $\rho$:
$E(f)$ is defined by $\rho$'s reach, and a policy that knows the channel
shrinks the gauge region. (At hidden $\gamma = 0.6$ the narrower corridor defeats the
\emph{hand} controller --- but a $10$-parameter search over the same
waypoint family finds a controller entering $100/100$ within the same
horizon, machine-checked. Reachability is jointly a property of geometry,
policy, and budget, and the policy axis is cheap to search: what the gate
policy cannot reach in $400$ rollouts, ten tuned parameters reach every
time.) The steering witness also fixes the query side: under
Proposition~\ref{prop:query}'s checkable hypotheses the competent blind
planner reaches the disagreement region with probability~$1$, and the
measured blind contact rate is $1.0$ at every calibrated $\gamma$.

\begin{figure}[t]
\centering
\includegraphics[width=0.66\linewidth]{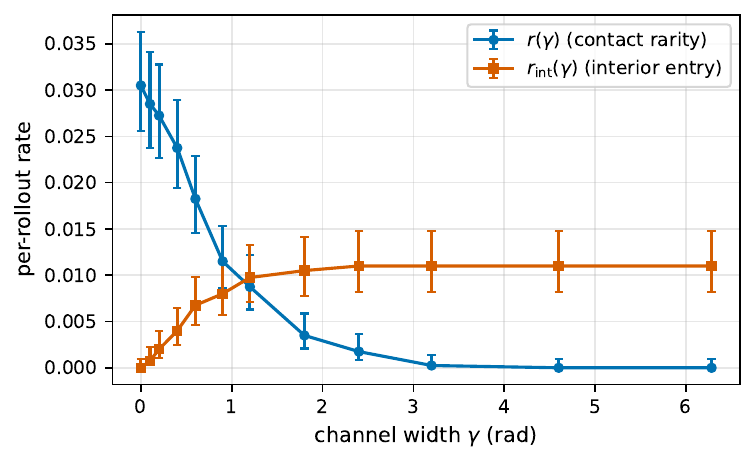}
\caption{The $\gamma$-curves, measured with common random numbers ($4{,}000$
CRN rollouts per point; Wilson 95\% CIs): the contact rarity $r(\gamma)$ is
nonincreasing (pathwise theorem; $0$ violations observed) and the
interior-entry rate $r_{\mathrm{int}}(\gamma)$ rises from its \emph{exact}
zero at $\gamma = 0$ (Lemma~\ref{lem:crossing}) --- the two grades of
impossibility are visible as the difference between a point pinned at zero by
a theorem and a curve passing through measurably positive values.}
\label{fig:gammacurves}
\end{figure}

\section{Mechanism without LLMs: three regimes, one knob}
\label{sec:mechanism}

Before any LLM enters, the mechanism grid measures what the knobs do to a
\emph{fixed} pair of wrong artifacts: the mode-blind model (the ring
deleted --- the companion papers' omission) and the filled-disc model of
Proposition~\ref{prop:harmless} (the wrong-topology invention). Each cell
of $\gamma \times \text{channel} \times \text{start}$ runs $400$
uniform-random rollouts (the gate-side falsifiability rates) and $16$
paired-seed MPC episodes per artifact (the play side).

\begin{table}[h]
\centering
\small
\begin{tabular}{lllrrrrr}
\toprule
$\gamma$ & channel & start & $r$ & $r_{\mathrm{int}}$ &
$\mathrm{disagree}_{\mathrm{fill}}$ &
$\mathrm{pc}_{\mathrm{blind}}$ & $\mathrm{pc}_{\mathrm{fill}}$ \\
\midrule
0.0 & ---     & outside & 0.0450 & 0.0000$^{\ast}$ & 0.000000$^{\ast}$ & 0.999 & 0.000$^{\ast}$ \\
0.6 & facing  & outside & 0.0350 & 0.0075 & 0.000844 & 0.022 & 0.343 \\
0.6 & hidden  & outside & 0.0450 & 0.0000$^{\dagger}$ & 0.000000$^{\dagger}$ & 0.999 & 0.000$^{\dagger}$ \\
1.2 & facing  & outside & 0.0175 & 0.0125 & 0.000625 & 0.007 & 0.220 \\
1.2 & hidden  & outside & 0.0450 & 0.0000$^{\dagger}$ & 0.000000$^{\dagger}$ & 0.999 & 0.000$^{\dagger}$ \\
0.0 & ---     & inside  & 0.7325 & 1.0000 & 0.968500 & 0.000$^{\dagger}$ & 1.769 \\
0.6 & facing  & inside  & 0.6875 & 1.0000 & 0.862969 & 0.000$^{\dagger}$ & 0.663 \\
0.6 & hidden  & inside  & 0.6700 & 1.0000 & 0.854094 & 0.000$^{\dagger}$ & 1.769 \\
1.2 & facing  & inside  & 0.6100 & 1.0000 & 0.774062 & 0.000$^{\dagger}$ & 0.351 \\
1.2 & hidden  & inside  & 0.5975 & 1.0000 & 0.757500 & 0.000$^{\dagger}$ & 1.741 \\
\bottomrule
\end{tabular}
\caption{The full mechanism grid. One wrong-topology artifact:
unfalsifiable and harmless ($\gamma = 0$, the bitwise theorem), falsifiable
and costly (facing channel), instantly falsified (inside start). The hidden
channel is observationally identical to the closed ring although it changes
$\beta_1$. Columns: $r$ contact rarity, $r_{\mathrm{int}}$ interior
entry, $\mathrm{disagree}_{\mathrm{fill}}$ per-transition filled-model
disagreement, $\mathrm{pc}$ \texttt{play\_cost}.
Rates over $400$ rollouts ($r$, $r_{\mathrm{int}}$; the JSON
carries per-cell Wilson CIs, e.g.\ $r = 0.045 \in [0.029, 0.070]$) and
$32{,}000$ transitions ($\mathrm{disagree}_{\mathrm{fill}}$); a zero cell
has 95\% Wilson upper bound $0.0095$ (rollout rates) and $1.2 \times
10^{-4}$ (transition rates) --- every qualitative separation in the table
exceeds its CI by orders of magnitude. $\mathrm{pc}$ over $16$ paired MPC
episodes. $^{\ast}$exact --- $r_{\mathrm{int}}$ and the interior
disagreement by Lemma~\ref{lem:crossing} (no sampled transition can land
inside), $\mathrm{pc}_{\mathrm{fill}}$ bitwise by
Proposition~\ref{prop:harmless}; $^{\dagger}$censored --- no occurrence
in that sample, content is the stated upper bound (episode-rate zeros:
$0/16$, Wilson upper $0.19$).}
\label{tab:mechanism}
\end{table}

Three readings organize the grid. \textbf{First, the three-regime walk is
real.} The same filled-disc artifact is unfalsifiable and costless at
$\gamma = 0$ (disagreement rate $0$, $\mathrm{pc}_{\mathrm{fill}} = 0$
--- the bitwise theorem observed), falsifiable at rate
${\sim}10^{-3}$/transition and costly through a facing channel
($\mathrm{pc}_{\mathrm{fill}}$ $0.343$/$0.220$), and instantly falsified
from inside (disagreement $0.77$--$0.97$). One artifact, three
certification regimes, two knobs.

\textbf{Second, an invented mode is exploited below random from inside.}
$\mathrm{pc}_{\mathrm{fill}} = 1.769$ at $\gamma = 0$, inside start: the
filled disc hallucinates freezes everywhere near the lode the agent already
sits on, so all imagined returns tie and the planner drifts \emph{off} the
reward. This is the dual of the companion papers' lure: an omitted mode
pulls the planner into danger; an invented one repels it from value, just
as destructively.

\textbf{Third --- the thesis, pre-LLM --- policy-relative reachability
beats topology.} The hidden-channel rows are observationally
\emph{identical} to the closed ring at every tolerance (interior entries
$0$, disagreement $0$, $\mathrm{pc}_{\mathrm{blind}}$ $0.999$,
$\mathrm{pc}_{\mathrm{fill}}$ $0.000$) although the channel changes the
free space's connectivity; with the \emph{facing} channel --- same
topology --- everything changes. What certificates and play see is the
mode's position relative to the operative reach. Note the two grades of
impossibility beneath the identical observations: the closed interior is
reach-null \emph{exactly} (a theorem), the hidden channel's interior is
reachable at positive-but-unmeasurable rate; Section~\ref{sec:openring}
returns to this pair with the danger curve.

\section{LLM synthesis on the closed ring: three families}
\label{sec:synthesis}

Throughout the experimental sections an \emph{arm} is a prompt/start
configuration, a \emph{cell} fixes every knob within an arm, and a
\emph{seed block} is the disjoint set of rollout seeds behind one
synthesis draw --- the experimental unit every interval in this paper is
computed over. The synthesis arm runs the companion paper's protocol
verbatim
(Azure GPT-5.x mini and large, $20$ seeds/cell, $N = 40$,
$\varepsilon = 10^{-9}$, $\le 5$ refines) on four pre-registered cells of
the closed ring: \textbf{A} (default prompt, outside start), \textbf{B}
(region-first guidance, outside), \textbf{C} (guidance plus a per-seed
topological summary of the sample's own contact evidence, outside), and
\textbf{D} (the same summary, \emph{inside} start). Because a filled disc
at $\gamma = 0$ passes every metric (Proposition~\ref{prop:harmless}),
code-level claims need more than the gate; the audit below is used for
every artifact in the paper.

\subsection{The behavioral audit (methodology)}
\label{sec:audit}

Textual inspection of synthesized code misleads in both directions: a
verbose ``trap'' comment can be measure-zero (behaviorally blind), and a
plausible-looking region clause can be wildly mis-parameterized. The audit
therefore classifies each artifact by what its \texttt{step()}
\emph{does}: probe it on an $81\times81$ state grid at two velocity
slices, mark every deviation from the pinned integrator (the artifact's
effective freeze/mode set), and classify that set's shape by angular
coverage around the ring center, interior fill, and boundedness:
\texttt{blind} / \texttt{point} / \texttt{vdep} (velocity-conditioned) /
\texttt{arc} / \texttt{loop} (hollow, bounded) / \texttt{disc} (filled) /
\texttt{complement} (hollow, unbounded) / \texttt{fill-unbounded}
(half-plane-like). The classifier is oracle-tested: artifacts constructed
with a known class by construction (a hollow annulus, a filled disc, a
half-plane, a velocity cap, a measure-zero trap, \ldots) must be labeled
correctly before the classifier touches real artifacts.

Two derived readings recur. \textbf{Blind-reference gates}: compare each
terminal gate with the pure integrator's gate on the \emph{same}
evidence (${\approx}0.999$ outside, $0.971$ inside). The empirical gate
bands are: correct ${\approx}1.0$; blind ${\approx}0.97$; posed-but-%
mis-parameterized hollow structures $0.5$--$0.9$; and interior-filling
structures $0.02$--$0.4$, because they freeze states the evidence shows
moving. A posed region with wrong parameters scores far \emph{below}
blind, so a lower mean gate can mean \emph{more} mode-posing, not less
(Section~\ref{sec:openring} shows this reversal live). \textbf{Per-patch
coverage}: the same mask measures whether an artifact encodes a
\emph{specific} region (e.g., the seen versus unseen patch of the companion
paper's bi-modal instrument, where the audit verified the published hand
inspection at $74/76$ integrator exactness and $0$ seen-patch coverage).

\begin{table}[h]
\centering
\footnotesize
\begin{tabular}{llrrrl}
\toprule
cell & prompt / start & pass & present & blind ref & terminal classes (behavioral) \\
\midrule
A$_{\mathrm{lg}}$ & default / out & 6/20 & 14 & 0.999 & fill-unbounded 13, blind 7 \\
A$_{\mathrm{mi}}$ & default / out & 6/20 & 14 & 0.999 & blind 15, arc 2, disc 2, fill-unb.\ 1 \\
B$_{\mathrm{lg}}$ & region / out  & 8/20 & 14 & 0.999 & blind 16, arc 2, disc 1, point 1 \\
C$_{\mathrm{lg}}$ & tda / out     & 9/20 & 14 & 0.999 & blind 19, vdep 1 \\
C$_{\mathrm{mi}}$ & tda / out     & 12/20 & 14 & 0.999 & blind 17, arc 2, point 1 \\
D$_{\mathrm{lg}}$ & tda / inside  & 1/20 & 20 & 0.971 & arc 5, point 4, blind 4, disc 2, compl.\ 2, loop 2, vdep 1 \\
D$_{\mathrm{mi}}$ & tda / inside  & 0/20 & 20 & 0.971 & disc 5, loop 5, arc 5, blind 4, compl.\ 1 \\
\bottomrule
\end{tabular}
\caption{GPT-5.x on the closed ring ($\gamma = 0$; lg = large, mi =
mini). ``tda'' denotes the per-seed topological-data-analysis summary
used as guidance in cells C and D; ``present'' counts mode-present
training samples; ``blind ref'' is the pure integrator's gate on the
same evidence. The two sizes share seeds, so their samples and
mode-present counts are identical by construction, not independent
replications.
\texttt{results/continuous\_synthesis\_ring2d\_\{mini,large\}\_gap0\{,\_pv-tda,-in\_pv-tda\}.json};
unit: seed block.}
\label{tab:closedring}
\end{table}

\paragraph{The identifiability event is family-independent.} Every
mode-absent sample --- probability $(1 - r)^{40} = 0.28$ at the defaults,
with $r = 0.0312$ measured at $30{,}000$ calibration rollouts
(\texttt{results/ring2d\_rarity\_sweep.json}; realized: $6/20$ seed
blocks, the unit both sizes share) --- yielded a certified, fully
mode-blind artifact, exploited at
$\mathrm{play\_cost} \approx 1.12$ with contact rate $1.0$, in every
family tested: GPT-5.x (both sizes, full arm $20/20$ clean in both), Qwen
(Hugging Face router, spot check: both mode-absent seeds certified blind
and exploited at the same $1.116$, the mode-present seed refused by the
gate), and Claude (agent-relayed, spot check: the same A-cell outcome
bit-for-bit). The play
number is bit-identical across families because the blind artifact is the
same program: identifiability is a property of the sample, not the model
--- the companion papers' law, now on a mode with an inside.

\paragraph{Held-out replication: independent-gate acceptance coincides
with that gate's own mode-miss, $156/156$.} An independent-gate audit
re-scores all $903$ synthesized artifacts from the $39$ synthesis
conditions --- the thin-neck cells of
Section~\ref{sec:thinneck} and the two summary-intervention arms of
Section~\ref{sec:sensor} included --- on a disjoint $40$-rollout gate
block per seed
(\texttt{scripts/heldout\_gate\_audit.py}, \texttt{ring2d} scope; output
\texttt{results/heldout\_\allowbreak gate\_\allowbreak
audit\_\allowbreak ring2d.json}), turning the identifiability law into a
held-out
measurement. On the incomplete arm restricted to artifacts whose
\emph{training} sample missed the mode, held-out acceptance coincides
with ``the independent gate's own block also missed the mode'' in
$156/156$ artifacts --- an exact identity checked per artifact, over
$91$ disjoint seed blocks (the two sizes share their blocks by
construction), zero off-diagonal --- and the two-factor prediction
$(1-r)^{N_{\mathrm{train}}+N_{\mathrm{gate}}}$ lies inside
the Wilson $95\%$ interval of the measured accepted-and-blind rate in
every one of the $39$ synthesis conditions (each interval over that
condition's own
disjoint, independent seed blocks --- never pooled across sizes), the
hidden-channel cells included --- audited
under the closed ring's $r$, which their own $30$k calibration equals to
five decimals. The same audit measures the converse: of $214$ in-sample
gate passes, $121$ fail the independent gate, and every one of the $121$
fails \emph{at a mode contact} --- an in-sample pass is training-set
consistency, and what it omits is exactly the mode. Acceptance --- by
the training sample or by an independent block --- is a sample event on
this instrument in both directions, measured on held-out evidence.

\paragraph{None of the three tested families repairs the ring from outside
evidence.} Across A--C,
zero mode-present artifacts pass the gate with the mode genuinely encoded.
The behavioral audit is what makes this claim sharp: the gate-passing
mode-present artifacts in B and C are \emph{textual point fits} ---
integrator plus a comment hypothesizing a tiny localized trap and an
exact-coordinate snap at $10^{-12}$ tolerance --- whose freeze-mask is
empty: behaviorally blind ($\mathrm{wall\_blindness} = 1.0$), exploited
like the pure blinds. The held-out audit supplies the class's sharpest
specimen: one trap (a hidden-channel cell, observationally the closed
ring) freezes on exact \texttt{==} equality with its sample's single
contact state, and the hardcoded coordinate sits $2$ ulps from the same
trajectory under a different \texttt{libm} --- its stored in-sample
$1.000$ is a property of the last bit of $\sin/\cos$ on the machine that
generated it, and the independent gate rejects it on every platform
(\texttt{results/heldout\_gate\_audit\_ring2d.json},
\texttt{train\_reproduction\_check}). This is Corollary~\ref{cor:evidence} operating:
from outside, ring and disc evidence are pathwise identical, so the
honest summary can only report the reachable arc, and it does not pose the
hole. Notably, richer prompting made the \emph{terminal} attempts less
geometric, not more: the default prompt's failing terminals are half-plane
templates ($13/14$ mode-present, large), while region/summary guidance
pushed terminals toward point fits and blinds ($16$--$19/20$).

\paragraph{From inside, repair is parameter-bound, not structure-bound.}
The D cell inverts the failure mode. With inside evidence and the
loop-reporting summary, models \emph{pose} geometric mode structure in
most cells --- mini: $5$ filled discs, $5$ hollow loops, $1$ complement,
$5$ arcs of $20$ --- yet $0/20$ pass (large: $1/20$), and the terminal
gates average $0.56$, far \emph{below} the $0.971$ blind reference: a
posed region with wrong parameters over-freezes states the evidence shows
free. The bottleneck is not posing the topology (the guidance says loop and
the models comply) but pinning its parameters to $10^{-9}$ from landing
evidence. A dose curve makes this precise: re-running the D cell with
$8\times$ the evidence ($N \in \{40, 80, 160, 320\}$ rollouts, $20$
seeds each) yields $0/20$ gate passes at \emph{every} dose, with the
posed structures' median inner-radius error pinned at $0.5$ throughout ---
that is, the median artifact poses $r = 4.0$, a \emph{round number},
whether it has seen $40$ rollouts or $320$. The parameter is not estimated
from the evidence; it is guessed from the template library, so evidence
dose cannot buy repair --- the synthesis-side twin of
Section~\ref{sec:sensor}'s dose-independence.

\paragraph{The separator ablation: a square ring.} Replacing the Euclidean
norm by the Chebyshev norm turns the annulus into a square ring (zero
curvature, corners) while the crossing lemma survives verbatim (Chebyshev
distance is 1-Lipschitz with respect to Euclidean steps; reach-null and the
bitwise Proposition~\ref{prop:harmless} equivalence re-verify on the
square). Every quantitative finding is norm-invariant: mode-absent samples
yield certified blind artifacts exploited at $1.117$; inside repair is
$0/20$ with structures posed in $15/20$. The shape of what is posed is not:
of the $12$ posed closed structures on \emph{square} evidence, $11$ are
round (freeze-boundary corner ratio ${\approx}1.0$ against the square's
$\sqrt 2$) and $12/12$ are written with \texttt{hypot} --- zero use the
truth's \texttt{max}/\texttt{abs} form. The companion paper's
bidirectional template prior, measured there on small patches, reaches the
enclosing separator itself: the topology axis and the template-prior axis
compose, and the gate-certified artifact's \emph{shape} comes from the prior
even when its \emph{topology} comes from the guidance. The D failures
are not template-prior collapse: the behavioral audit classifies them as
right-structure-wrong-parameters, which is a different failure mode.

\paragraph{The sole gate-certified recovery uses the gauge-free
disc-complement form.} The single gate-certified D recovery
(large, $1/20$) freezes exactly when the post-integration landing has
$\lVert p - c \rVert \ge 3.5$: the \emph{disc-complement} form. It is
certified perfect on its own gate sample (gate $1.0$, blindness $0.0$, $\mathrm{play\_cost}$
$0.0$) although its outer boundary is wrong --- from inside, the probe
freezes at the inner boundary and never reaches the outer one, so the
complement is indistinguishable from the annulus on $\mathcal{R}$: the
outer boundary is \textbf{gauge} (Proposition~\ref{prop:quotient}), and
the artifact's comment even cites the summary's ``closed loop
($\beta_1 = 1$).'' Its single load-bearing parameter is the true inner
radius --- a \emph{round number} anchored at the center that the
contract's reward section already names. The strongest cross-family
repairer (Claude, agent-relayed context-free, $3/3$ D repairs) wrote
exactly the same form. Gate-pass from inside therefore measures a
conjunction --- \emph{(reachable-equivalent structure)} $\times$
\emph{(exact-parameter guessability)} --- not topology-understanding
alone; Section~\ref{sec:openring} exploits this by moving the true
parameters off round values.

\section{The open-ring arm: danger is topology relative to reach}
\label{sec:openring}

The registered arm sweeps $\gamma$ with three pre-registered hypotheses:
\textbf{H1}, the blind model's exploitation collapses with a facing
channel and persists with a hidden one (danger is reachability, not
$\beta_1$); \textbf{H2}, posed artifact structure tracks the guidance's
reported topology (Section~\ref{sec:sensor}); \textbf{H3}, gate-pass from
inside stays ${\approx}0$ at every $\gamma$ because the channel-edge
angles $\pi \pm \gamma/2$ are not round-guessable. Cells: outside/default
at $\gamma \in \{0.05, 0.1, 0.2, 0.4, 0.6, 1.2\}$ facing ($20$ seeds
mini) and $\{0.6, 1.2\}$ hidden ($10$); inside/summary at
$\{0.2, 0.6, 1.2, 1.8, 2.4\}$ ($20$--$30$); large robustness cells at the
critical points; a dense CPU danger curve ($16$ paired MPC episodes per
$\gamma$); and Claude relay spots at the knee and at wide $\gamma$.

\begin{figure}[t]
\centering
\includegraphics[width=0.72\linewidth]{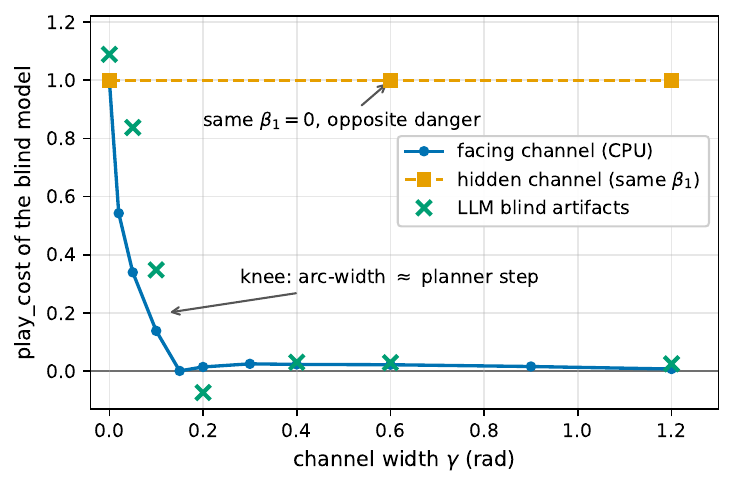}
\caption{Blind-model play cost versus channel width $\gamma$. Blue: the
dense CPU sweep of the facing channel ($16$ paired MPC episodes per
point); green: the LLM arms' exploited blind artifacts (the same blind
program in all three model families); orange: the hidden-channel control
at the same $\gamma$ and the same $\beta_1 = 0$.}
\label{fig:dangercurve}
\end{figure}

\begin{table}[h]
\centering
\small
\begin{tabular}{lrrrrrrrrr}
\toprule
$\gamma$ & 0.0 & 0.02 & 0.05 & 0.1 & 0.15 & 0.2 & 0.4 & 0.6 & 1.2 \\
\midrule
$\mathrm{pc}_{\mathrm{blind}}$ (facing) & 0.999 & 0.543 & 0.340 & 0.139 & 0.001 & 0.015 & 0.023 & 0.022 & 0.007 \\
contact rate & 1.00 & 0.94 & 0.56 & 0.44 & 0.12 & 0.06 & 0.00$^{\dagger}$ & 0.00$^{\dagger}$ & 0.00$^{\dagger}$ \\
\bottomrule
\end{tabular}
\caption{The facing danger curve (CPU, $16$ paired MPC episodes per
point; $\mathrm{pc}$ denotes \texttt{play\_cost};
\texttt{results/continuous\_ring2d\_open\_sweep\_summary.json}, one
seed-block cell per $(\gamma, \text{arm}, \text{size})$). The
\emph{hidden} channel holds $\mathrm{pc} = 1.116$ at $\gamma = 0.6$
\emph{and} $1.2$. Figure~\ref{fig:dangercurve} plots all three series.
$^{\dagger}$censored: $0/16$ episodes, Wilson upper $0.19$.}
\label{tab:dangercurve}
\end{table}

\paragraph{H1 confirmed: a facing channel collapses danger; an equally
wide hidden channel keeps it.} The facing
curve collapses
over a knee at $\gamma \approx 0.1$--$0.15$; the LLM cells reproduce it on
their exploited blind artifacts in both sizes and in the Claude relay
($0.348$ at the knee in all three families --- the play number is the blind
program's, family-independent by construction). The \emph{hidden} channel
at the same $\gamma$ holds $\mathrm{play\_cost} = 1.116$ --- full
strength, unchanged from the closed ring. Same $\beta_1 = 0$, opposite
danger: what collapses the danger is the competent planner's reachability
of the omission, not the topology change. The mechanism is the gate
quotient on the play side: as the channel opens where the planner drives,
the phantom stops being phantom --- the blind plan (straight at the lode)
becomes \emph{executable in the truth}, so blind and truth agree along the
operative path exactly as $E(f)$-members agree on $\mathcal{R}$. The knee
sits where the channel's arc width
($\gamma\, r_{\mathrm{in}} \approx 0.35$ at $\gamma = 0.1$) admits the
planner's step.

\paragraph{H3 confirmed --- certified passes are not repairs.} Inside
gate-pass rates across $\gamma$: $1/40$, $0/20$, $0/40$, $0/20$, $1/30$,
$1/40$ --- essentially zero everywhere, both sizes, with terminal gates far
below the blind reference (structures posed, parameters unpinnable: the
channel edges are not round numbers). The sharpest datum is the strongest
repairer's: at $\gamma = 2.4$ the Claude relay \emph{reconstructs} ``ring
$r = 3.5$, gap $2.4$\,rad'' from the sample and passes the gate in $2/3$
seeds --- with mode blindness $1.0$ and $\mathrm{play\_cost}$ $0.0$. The
passing models match the sample through wrong parameters, fail the mode
probes, and are harmless only because the wide-open ring no longer
obstructs anything. Certification, correctness, and consequence come apart
again (Remark~\ref{rem:triad}), now with real synthesis. Held-out
re-scoring adds what the inside failures are made of: $237/320$
inside-start artifacts fail the independent evaluation \emph{off} the
mode as well (accuracies down to $0.05$), against $87/343$ for outside
starts (\texttt{results/heldout\_gate\_audit\_ring2d.json}, unit
artifact) --- the loop-evidence regime produces globally wrong models,
not clean integrators missing a clause.

\paragraph{The identifiability axis moves independently alongside.} As
$\gamma$ grows the ring shrinks and the mode-absent rate rises ($6 \to 16$
of $20$ over the sweep): the synthesis axis (does the sample contain the
mode?) and the play axis (does the omission obstruct the planner?) are
separate knobs even within one instrument --- the theme
Section~\ref{sec:ndim} pushes to its extreme.

\subsection{The metric converse: a thin neck, topology held fixed}
\label{sec:thinneck}

The $\gamma$-channel reopens the interior \emph{topologically}. Its
converse thins the band from outside inside a $0.3$-rad sector
($r_{\mathrm{out}}$ dips to $r_{\mathrm{in}} + \mathrm{neck}$; the hole is
invariant): $\beta_1 = 1$ and the band still separates the plane in the
continuum at every neck $> 0$, but interior entry now requires a
\emph{single step longer than the neck} --- the local crossing lemma,
machine-checked in Lean beside Lemma~\ref{lem:crossing}: a mode set
containing the thin annulus $[r_{\mathrm{in}}, r_{\mathrm{in}} + w]$ with
$w > \Delta$ seals the hole whatever its shape. With $\Delta = 1.0$ the
knob crosses Lemma~\ref{lem:crossing}'s metric hypothesis while the
topology never moves, and reachability below the threshold is
\emph{exhibited}, not sampled: a deterministic constant-thrust witness
leaps the $0.5$ neck (one $0.547$-step from $d = 4.03$), and the same
$40$-witness family is blocked at neck $\ge 1.0$.

\begin{table}[h]
\centering
\small
\begin{tabular}{lrrrrr}
\toprule
neck (facing) & $r$ & $r_{\mathrm{int}}$ & $\mathrm{dis}_{\mathrm{fill}}$ &
$\mathrm{pc}_{\mathrm{blind}}$ & $\mathrm{pc}_{\mathrm{fill}}$ \\
\midrule
closed & 0.0312 & $0^{\ast}$ & $0^{\ast}$ & 0.999 & $0.000^{\ast}$ \\
0.1 & 0.0227 & 0.0020 & $4.3\times10^{-4}$ & 0.451 & 0.594 \\
0.2 & 0.0248 & $1.7\times10^{-4}$ & $0^{\dagger}$ & 0.962 & 0.573 \\
0.4 & 0.0255 & $0^{\dagger}$ & $0^{\dagger}$ & 0.971 & 0.500 \\
0.6 & 0.0262 & $0^{\dagger}$ & $0^{\dagger}$ & 0.962 & $0.000^{\dagger}$ \\
0.8 & 0.0269 & $0^{\dagger}$ & $0^{\dagger}$ & 0.976 & $0.000^{\dagger}$ \\
1.2 & 0.0292 & $0^{\ast}$ & $0^{\ast}$ & 0.993 & $0.000^{\ast}$ \\
\bottomrule
\end{tabular}
\caption{The thin-neck control, topology held fixed ($r$ contact rarity,
$r_{\mathrm{int}}$ interior entry, $\mathrm{dis}_{\mathrm{fill}}$
per-transition filled-model disagreement, $\mathrm{pc}$
\texttt{play\_cost}). Each cell uses $30{,}000$ gate rollouts,
$320{,}000$ disagreement transitions, and $16$ paired MPC episodes
(\texttt{results/ring2d\_thin\_neck.json}; design and readings
pre-registered before the run). $^{\ast}$exact --- the
closed row by Lemma~\ref{lem:crossing} and
Proposition~\ref{prop:harmless}, the neck-$1.2$ row by the local lemma at
$\Delta = 1.0 < 1.2$, imagined steps included. $^{\dagger}$censored ---
$0$ occurrences in that sample; Wilson $95\%$ uppers $1.3\times10^{-4}$
(rollouts), $1.2\times10^{-5}$ (transitions), $0.19$ (episodes;
\texttt{results/ring2d\_zero\_wilson.json}, emitted from the sweep's own
denominators).}
\label{tab:thinneck}
\end{table}

Table~\ref{tab:thinneck} carries three findings. \textbf{Hidden necks
are bit-identical to the closed
ring} at all six thicknesses (the same $r$ to the last float, $0$
entries, $\mathrm{pc}_{\mathrm{blind}} = 0.999$): reach, not geometry, a
third time on this instrument. \textbf{The planner leaks only at the
thinnest neck}: $\mathrm{pc}_{\mathrm{blind}}$ collapses to $0.451$ at
neck $0.1$ (blind contact rate $0.56$ --- the blind planner escapes
through the neck in half the paired episodes) and recovers to
$0.96$--$0.99$ from neck $0.2$ up, because one landing in the thin band
pins the episode (freeze zeroes the velocity, and from rest the next
step is $R_L = 0.03$): threading needs the landing phase to skip a band
the realized approach rarely skips. \textbf{Certified-and-costly, on the
metric axis}: the filled-disc model can disagree with truth on a sampled
transition only via a leap --- $139/320{,}000$ transitions at neck
$0.1$, $0/320{,}000$ at every thicker neck --- yet
$\mathrm{pc}_{\mathrm{fill}} = 0.59/0.57/0.50$ at neck $0.1/0.2/0.4$,
with fill contact rate $0.00$: the cost is plan divergence, because the
\emph{planner's imagined candidates} leap (imagination reaches speed
$> 4$) where the random gate's rollouts never do (arrival speeds
$\approx 1.5$). At neck $\in \{0.2, 0.4\}$ the wrong topology is
behaviorally indistinguishable from truth across $320{,}000$ sampled
transitions and $30{,}000$ rollouts while costing half the play margin
--- the companion paper's certified-region/query-mass gap, reproduced
with the topology knob held fixed. The gate certifies where it looks;
the planner leaps where it plans.

\paragraph{The loop does not write the neck: $0/120$ artifacts pose any
angular structure.} The synthesis analogue of posing the hole is posing
the dip, and at neck $\in \{0.1, 0.2, 0.4\}$ (facing, incomplete arm,
both sizes, $20$ seeds per cell;
\texttt{results/continuous\_\allowbreak synthesis\_\allowbreak
ring2d\_\allowbreak \{mini,large\}\_\allowbreak gap0-nk\{0.1,0.2,0.4\}.json},
scanned by \texttt{results/ring2d\_\allowbreak neck\_\allowbreak
synthesis\_\allowbreak scan.json}) no artifact does either: $0/120$
contain an angular term of any kind and $0/120$ pose even a
\emph{uniform} band --- the conditionals that appear are one-sided discs
at the reward radius, velocity thresholds, and three exact-coordinate
point fits of the family above, one of which (freezing on \texttt{==}
equality with its training sample's single contact state) reached
in-sample $1.000$ on the platform that synthesized it and fails the
independent gate \texttt{mode\_only}. Evidence is not the bottleneck:
the two seed blocks at neck $0.1$ whose training samples contain
leap-through transitions ($16$ and $12$ interior landings; the sizes
share their blocks by construction) are answered by a reward-threshold
freeze and by free flight --- shown the metric hole, the loop writes no
geometry at all. All $45$ gate passes across the six cells are
wall-blind, and the held-out audit absorbs the cells without strain
(the $39/39$ and $156/156$ above). The regime above --- certified
relative to the gate's sampled transitions, costly in play --- therefore
carries a synthesis-side converse: at necks the gate cannot distinguish
from the closed ring, the accepted artifacts are the same blind
integrators as everywhere else, admitted by the same two-factor law.

\section{Mitigation against an enclosed mode: a covering law}
\label{sec:mitigation}

The companion continuous paper closed its loop with a planner-side defense:
\emph{distrust-region replanning}. After each real step the planner compares
the model's prediction to the observation; a mismatch records a point
\emph{fence} at the refuted prediction, and imagined rollouts truncate when
they cross near a fence --- on the 1D clamps and the small convex patches
this collapses the blind planner's exploitation to a bounded first-contact
transient, at zero cost when the model is correct. The same module, verbatim
(\texttt{pos\_dims} $=(0,1)$, the patch-calibrated fence radius
$\varepsilon_f = 0.5$), meets the ring:

\begin{table}[h]
\centering
\small
\begin{tabular}{llrrrr}
\toprule
cell & $\varepsilon_f$ & $\mathrm{pc}_{\mathrm{model}}$ &
$\mathrm{pc}_{\mathrm{mitigated}}$ & contact & fences/episode \\
\midrule
closed ring, outside, blind & 0.5 & 0.999 & 1.003 & 1.00 & 3.8 \\
closed ring, outside, blind & 1.0 & 0.999 & 1.005 & 1.00 & 2.3 \\
closed ring, outside, blind & 2.0 & 0.999 & 0.742 & 1.00 & 1.0 \\
hidden $\gamma = 0.6$, outside, blind & 0.5 & 0.999 & 1.003 & 1.00 & 3.8 \\
closed ring, \emph{inside, filled} & 0.5 & 1.769 & \textbf{1.769} & 1.00 & 18.6 \\
\midrule
nerve fence, episodic & 0.5 & 0.999 & 0.957 & 1.00 & 2.0 \\
\textbf{nerve fence, persistent} & 0.5 & 0.999 & \textbf{0.058} & 0.06 & 0.1 \\
\textbf{freedom patch} (inside, filled) & 0.5 & 1.769 & \textbf{0.029} & --- & 80 \\
\bottomrule
\end{tabular}
\caption{Planner-side mitigation on the ring ($16$ paired MPC episodes
per cell;
\texttt{results/continuous\_mitigation\_ring\{,\_eps1,\_eps2,\_nerve\}.json}
and \texttt{results/continuous\_ring2d\_optimism.json}; unit: paired
episode). $\mathrm{pc}_{\mathrm{model}}$ and
$\mathrm{pc}_{\mathrm{mitigated}}$ are \texttt{play\_cost} before and
under the defense. A \emph{nerve fence} links violations into
tangentially extended segments; a \emph{freedom patch} locally
unfreezes imagination where the model was refuted as too pessimistic.
Rows 1--5 use point fences, rows 6--7 compare episodic and persistent
nerve fences, and the final row applies freedom patching to the
inside-start filled model.}
\label{tab:mitigation}
\end{table}

\paragraph{The failure is a covering law.} A point fence is a
zero-dimensional object; the ring's reachable boundary is one-dimensional
(the west outer arc, ${\sim}16$ world-units at the defaults). Sealing a
1-dimensional boundary with $\varepsilon_f$-balls needs
covering-number-many violations --- boundary measure over fence radius ---
and the argmax planner concedes only $2$--$4$ contacts per episode while
re-routing through unfenced arc. The companion paper's ``collapse decays
with mode distance'' was the small-boundary shadow of this law: on a patch
of circumference ${\sim}6$ at $\varepsilon_f = 0.5$ a handful of fences
suffices; on the ring they never do. Coarsening the fence to the geometry's
own scale ($\varepsilon_f = 2.0$) buys partial relief here but is not a
uniform remedy --- that radius would swallow the patch instruments whole.
The structural fix is dimensional --- and it works, but only jointly with
persistence. A \emph{nerve} fence links violations into segments and
extends them $3$ units along the estimated boundary tangent (two violations
$0.5$ apart already fix the local direction). Episodically it still fails
($\mathrm{pc}$ $0.957$): a single episode concedes only ${\sim}2$ lessons,
and instrumentation shows the planner simply re-crosses the wall
\emph{in imagination} beyond the sealed corridor --- the blind model lets
any unfenced crossing through, so the agent slides along the boundary at
one freeze per lesson. Persisting the fences across episodes (deployment
monitoring rather than per-episode defense) changes the accounting: the
boundary's covering cost is paid \emph{once}. Measured: $2$ violations in
episode~1 (return $1.01$), then \textbf{zero} violations and returns equal
to the truth planner's in every one of episodes $2$--$16$ ($17.04$ vs
$17.04$, \ldots); aggregate $\mathrm{pc}$ $0.999 \to 0.058$, contact
rate $1/16$. The mechanism is Proposition~\ref{prop:quotient} run in
reverse: the extended fence reconstructs enough of the boundary that the
blind model \emph{plus its fence memory} becomes planner-equivalent to the
truth on the operative side --- certification could not distinguish members
of $E(f)$, and the defense engineers the planner back into it. This is
theorem-backed under checkable hypotheses:

\begin{proposition}[fence sufficiency]
\label{prop:fence}
Suppose \textup{(COV)} every segment of length $\le \Delta$ crossing the
reachable outer boundary arc of $A$ passes within $\varepsilon_f$ of the
fence set, and \textup{(RG-west)} the best non-crossing candidate's
imagined return exceeds every crossing candidate's pre-crossing return plus
its tie-break advantage (at the defaults: ${\approx}2$ versus
${\le}0.1$, checkable over the visited envelope). Then every crossing
candidate truncates at or before its crossing; every non-crossing candidate
scores identically under truth imagination and fenced-blind imagination
(the models agree off the band, and truth imagination freezes only on
crossings); and the argmax coincides. The mitigated planner \emph{is} the
truth planner: $\mathrm{play\_cost} = 0$ exactly.
\end{proposition}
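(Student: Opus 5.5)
I plan to prove the statement by comparing, candidate by candidate, two imagined-return functionals on the same candidate set $\mathcal{C}$: $J_{\mathrm{truth}}$ (imagination under the closed-ring dynamics $f$) and $J_{\mathrm{fence}}$ (imagination under the blind model with truncation at the persisted fence set). The candidates split into \emph{crossing} ones (some imagined step is a segment of length $\le \Delta$ crossing the reachable outer arc of $A$) and \emph{non-crossing} ones. The aim is to show that $\arg\max J_{\mathrm{fence}} = \arg\max J_{\mathrm{truth}}$ at every real step from an outside state, with the same tie-break. Then induction over real steps, as in Proposition~\ref{prop:harmless}(ii), gives bitwise-identical real trajectories, and hence $\mathrm{play\_cost} = 0$.

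First, I will handle the non-crossing candidates. Such an imagined path never lands in $A$. From an outside start it also never reaches the interior, by Lemma~\ref{lem:crossing} applied to imagined paths with the same integrator and step bound $\Delta \le 1.0 < w$. Off the band the blind model and $f$ coincide, because both are the pinned integrator. I also need the fence never to truncate a non-crossing candidate. Here the argument is not purely formal: I will either take the truncation rule to fire only on a crossing segment, or add a clearance condition to (COV). Granting that, $J_{\mathrm{fence}} = J_{\mathrm{truth}}$ on every non-crossing candidate.

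Second, I will handle the crossing candidates. Let $t$ be the first crossing step. By (COV) the crossing segment passes within $\varepsilon_f$ of the fence set, so fenced imagination truncates at or before step $t$. Its score is therefore at most the pre-crossing return plus the tie-break advantage. Under truth imagination the same candidate freezes at its first $A$-landing, at the previous position with zero velocity. I need its truth score to be dominated in the same way, since post-freeze steps from rest on the outer side stay outside the phantom basin. (RG-west) then makes the best non-crossing candidate beat every crossing candidate under \emph{both} functionals. So the argmax is attained among non-crossing candidates, where the two functionals agree by the first step, and the same deterministic tie-break selects the same first action.

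The main obstacle is this second step's truth-side bound. A truth-frozen crossing candidate may continue accruing reward after the freeze, so "pre-crossing return plus tie-break" must be justified as an upper bound for its \emph{full} truth-imagined return. I would try first to bound post-freeze reward by the real lode's small amplitude over the remaining horizon, checked over the visited envelope, and fold it into the stated ${\le}0.1$ margin. If that fails, I would strengthen (RG-west) to compare against the full truth return of crossing candidates, which is equally checkable numerically.
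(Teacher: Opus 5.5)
Your proposal is correct and follows the paper's own argument. (COV) truncates every crossing candidate at or before its crossing; the blind model and the truth coincide off the band, so non-crossing candidates score identically; (RG-west) places the argmax among non-crossing candidates; and the deterministic planner then reproduces the truth planner step by step.

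The two caveats you raise are real, and the paper's one-line proof passes over both. The first is fence truncation of a non-crossing candidate. The second is post-freeze reward of a crossing candidate under truth imagination: ``the argmax coincides'' tacitly reads (RG-west) as dominance under truth imagination as well, so your strengthened hypothesis is the right repair. On the first caveat, note that rewards are nonnegative, so a spurious truncation only lowers a fenced score. It therefore matters only if it hits the candidate the truth planner selects, and you need not change the truncation convention.
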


Episode 1 is the learning transient the proposition does not (and should
not) cover: (COV) is established \emph{by} its two lessons. And (COV) is
policy-payable by three routes, measured head-to-head (all persistent):
fence \emph{geometry} (the tangential extension: $2$ lessons,
$\mathrm{pc}$ $0.058$), \emph{exploration} (a boundary-tracing probe
collects $50$ lessons in one episode --- plus $11$ patch-ups, since its
$1.05$-unit spacing left sub-$\varepsilon_f$ gaps: the covering law bites
the prober too --- $\mathrm{pc}$ $0.123$), and \emph{passivity} (the
planner's own contacts, $9$ lessons spread over $16$ episodes,
$\mathrm{pc}$ $0.152$ with occasional regressions). Persistence is the
ingredient common to everything that works; geometry beats exploration
beats waiting. The fence is itself an estimated boundary ---
Section~\ref{sec:sensor} shows next that any such boundary
representation inherits the evidence sensor's finite-resolution limit.

\paragraph{Against an invented mode, distrust is inert by construction.}
The filled-disc model from inside (the below-random cell of
Section~\ref{sec:mechanism}) hallucinates freezes everywhere, so every
imagined future collapses to the current position and all candidate actions
tie \emph{before} any fence can matter; truncating trust in imagined value
cannot help when the lie has already flattened imagined value. The fences
fire constantly ($18.6$ per episode --- the defense \emph{detects} the lie
every few steps) and change nothing ($\mathrm{pc}$ identical to three
decimals). Distrust-based mitigation is one-sided in exactly the sense the
danger is two-sided: it can stop the planner from \emph{believing} false
free space, but recovering value that the model hides behind false
obstructions needs the \emph{dual} certificate. We built it:
\textbf{freedom patching} records, at each real step where the model
predicted a freeze and the truth moved, a freedom point; during
imagination, a model step that freezes within $\varepsilon_f$ of a freedom
point is replaced by the contract's own pinned integrator (mode-free,
legitimately known to the pipeline). On the same cell this collapses the
below-random exploitation at once: $\mathrm{pc}$ $1.769 \to 0.029$
episodic ($0.021$ persistent), with near-truth returns from
\emph{episode 1}. The asymmetry of the two defenses' costs is itself the
finding: an invented mode \emph{lies at every step} ($80$ freedom
certificates per episode --- it refutes itself constantly, so within-episode
patching suffices), while an omitted mode lies \emph{rarely} ($2$ lessons
per episode --- so its fence must persist to ever pay the covering cost).
The dual result is theorem-backed the same way:

\begin{proposition}[patch sufficiency]
\label{prop:patch}
Let $\hat f$ freeze on a region $B$ where $f$ moves freely, and suppose
\textup{(CERT)}: every point of $B$ visited by any candidate's imagined
path from the current real state lies within $\varepsilon_f$ of the
freedom-certificate set. Then patched imagination equals truth imagination
for every candidate (off $B$ the models agree; on the part of $B$ the
freedom certificates cover, the patch substitutes the pinned integrator, which \emph{is} $f$ there),
so the argmax coincides with the truth planner's and
$\mathrm{play\_cost} = 0$ exactly.
\end{proposition}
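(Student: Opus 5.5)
The argument is a step-by-step induction along each imagined path, followed by the same ``identical scores $\Rightarrow$ identical argmax'' step used in Proposition~\ref{prop:harmless}(ii).

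Fix the current real state $s$ and a candidate action sequence $a_0,\dots,a_{H-1}$. Write $\tilde s_t$ for the truth-imagined states and $\bar s_t$ for the patched-imagined states, both started at $s$. We prove by induction on $t$ that $\bar s_t = \tilde s_t$ for every $t \le H$. The base case $\bar s_0 = \tilde s_0 = s$ is immediate.

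For the inductive step, assume $\bar s_t = \tilde s_t$. The patched step from $\bar s_t$ under $a_t$ falls into one of three cases.
\begin{itemize}
\item \emph{The model step does not freeze in $B$.} The hypothesis that $\hat f$ differs from $f$ only by freezing on $B$ means $\hat f = f$ here, so the patch is inactive and $\bar s_{t+1} = f(\tilde s_t, a_t) = \tilde s_{t+1}$.
\item \emph{The model step freezes at a point of $B$ visited by the imagined path.} By (CERT) that point lies within $\varepsilon_f$ of the freedom-certificate set. The patch therefore fires and substitutes the pinned integrator. Since $f$ moves freely on $B$, $f$ coincides with that integrator there, so again $\bar s_{t+1} = \tilde s_{t+1}$.
\item \emph{The patch fires elsewhere.} Freedom points are recorded only where the truth was observed to move while the model predicted a freeze. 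Any freeze of $\hat f$ that the patch overrides is thus a $B$-freeze, where $f$ equals the integrator. So even a spurious firing reproduces $f$.
\end{itemize}

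With the paths equal, the per-step rewards are equal, since reward is a function of state alone. Every candidate in $\mathcal{C}$ therefore receives the same imagined return under patched and truth imagination. A deterministic argmax with a fixed tie-break then selects the same first action.

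Inducting over real steps as in Proposition~\ref{prop:harmless}(ii), the real trajectories coincide seed for seed. Hence the normalized regret is zero and $\mathrm{play\_cost}=0$ exactly.

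The main obstacle is making precise which state the patch is evaluated at, and therefore which point (CERT) must cover. If the patch tests proximity at the pre-step position, which is where a freeze halts, while (CERT) speaks of the visited point of $B$ (the attempted landing), then the two conventions must be aligned. I would fix the convention as ``the point of $B$ the model's freeze is triggered by.'' I would then check that the third case is sound, namely that an override never occurs where $\hat f$'s freeze is correct. This requires either that $f$ has no freezes of its own near freedom points, or reading ``$f$ moves freely on $B$'' as $f$ being the integrator on the whole $\varepsilon_f$-neighbourhood used by the patch. I expect this to be an explicit side hypothesis in the formalization, and I would state it as such.
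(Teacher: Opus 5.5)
Your route is the paper's own. The parenthetical in the statement is the entire proof: off $B$ the models agree, on the certified part of $B$ the patch substitutes the pinned integrator (which is $f$ there), and equal imagined returns give the same argmax. Your induction along imagined paths, followed by the real-step induction taken from Proposition~\ref{prop:harmless}(ii), is that sketch written out.

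The one step that fails as written is your third bullet, and your closing paragraph is right to distrust it. Freedom points are recorded only at refuted $B$-freezes. But the patch fires on any model freeze \emph{within $\varepsilon_f$} of such a point, so ``any overridden freeze is a $B$-freeze'' does not follow. On the ring it is false:
\begin{enumerate}
\item A real step from just inside $r_{\mathrm{in}} = 3.5$ that stays in the hole leaves a freedom point there, since $\hat f$ predicted a freeze and $f$ moved.
\item An imagined step from nearby that lands in the annulus $A$ is frozen by both $\hat f$ and $f$.
\item Whichever point the patch tests, that step lies within $\varepsilon_f = 0.5$ of the freedom point.
\item So patched imagination passes through the band while truth imagination stops.
\end{enumerate}
Hence ``patched equals truth for every candidate'' fails without more. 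The same objection applies to ``the patch is inactive'' in your first bullet, whose case includes correct $A$-freezes.

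So drop the third bullet's justification and make your side hypothesis an explicit assumption: the patch never fires on a step where $f$ itself freezes. For instance, require that $f$ equal the pinned integrator on the $\varepsilon_f$-neighbourhood of the certificate set, at the point the patch tests. With that assumption your induction closes. The paper's proof needs the same assumption and uses it tacitly: ``off $B$ the models agree'' treats the patched model as acting only on $B$ and ignores firings off $B$.

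Finally, state (CERT) at every real step of the episode, not only at the current state. Your induction over real steps uses it at each step, and the paper's measured $0.029$ is exactly the prefix on which it fails.
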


From inside, every step yields a certificate (the model lies each step ---
the lie rate is the occupation of the disagreement region under the
operative policy), so (CERT) holds within one episode's prefix; the
measured $0.029$ is that prefix. The omitted-mode lure and the invented-mode repulsion
(Remark~\ref{rem:triad}'s two wrongnesses) need \emph{opposite}
planner-side defenses; a defense calibrated for one is inert against the
other; and each defense's price is set by its failure's \emph{lie rate}.

\section{The evidence sensor has finite resolution}
\label{sec:sensor}

The C and D cells' guidance includes a pre-registered
\emph{topological summary} of the sample's own contact evidence: an
honest, shape-agnostic text (never naming a shape family, wording frozen
before any run) reporting cluster count, bounding box, and a persistent
$\hat\beta_1$ from a Rips complex on the deduplicated landing cloud
(grid $0.05$, cap $90$ points, bars above $3\times$ the median
nearest-neighbor spacing). The summary is a \emph{sensor}, and sensors
have resolution: measured on regenerated inside evidence, the detector
reports $\hat\beta_1 = 1$ for every $\gamma \le 1.2$ --- it bridges any
channel narrower than about two arc-units at this density --- flips
seed-dependently at $\gamma = 1.8$, and is cleanly $0$ at $2.4$; the true
$\beta_1$ is $0$ for every $\gamma > 0$. The sweep therefore supplies
the contrasts H2 needs: gaps where
guidance and truth disagree, one gap where guidance disagrees with itself
across seeds, and gaps where they agree.

\paragraph{The resolution limit is geometric, not budgetary.} A factorial
over the detector's budget and the evidence dose
(\texttt{scripts/ring2d\_sensor\_resolution.py}: cap
$\in \{30, 90, 270\}$ points $\times$ evidence
$N \in \{40, 160\}$ rollouts $\times$ $\gamma$, five seeds each)
shows the flip does not move: $\hat\beta_1 = 1$ at every
$\gamma \le 1.2$ under \emph{every} budget and dose, $0$ at $2.4$ under
all of them, with only the boundary gap $1.8$ wobbling.

\paragraph{A two-sided barcode sandwich explains the flip.} The
mechanism is Vietoris--Rips geometry. To the resolution of the proved
sandwich, the detector reports $\hat\beta_1 = 1$ when
$\sqrt{3}\rho - 2\rho \sin(\Delta\theta_{\max}/2) > \tau$, where $\rho$
is the cloud's mean radius, $\Delta\theta_{\max}$ the \emph{largest
angular gap of the sample} (the channel or a subsampling gap, whichever
is larger), and $\tau$ the detector's persistence threshold; this
mean-radius display is the sandwich's collapse at
$r_{\min} \approx r_{\max} \approx \rho$, and as a pointwise predictor
it is measured, not exact. The exact result is two-sided. The winding
bar is born exactly at the largest gap's chord (proved: a winding cycle
must contain an edge spanning every angular gap); it cannot die before
$\sqrt{3}\, r_{\min}$ (proved self-containedly: below that scale every
triangle's boundary has winding zero, so winding obstructs filling); and
it dies by $2 r_{\max} \sin(\theta^*/2)$ with an explicit $\theta^*$
(the filling lemma, proved: an explicit gap-aware spanning triangle of
winding one plus a fan retraction exhibit the 2-chain that bounds the
born \emph{class}, no general filling machinery needed). A pairing lemma
transfers the sandwich from the class to the \emph{bar} whenever exactly
one bar spans
$[2 r_{\max}\sin(\Delta\theta_{\max}/2),\, \sqrt3\,r_{\min})$ --- a rank
condition read off the barcode, true in every factorial row.
Consequently $\hat\beta_1 \ge 1$ is
guaranteed when $\sqrt3\,r_{\min} - 2 r_{\max}\sin(\Delta\theta_{\max}/2)
> \tau$, and no persistent winding bar exists when
$2 r_{\max}\sin(\theta^*/2) - 2 r_{\min}\sin(\Delta\theta_{\max}/2) <
\tau$ (only a spurious non-winding class could still fire --- never
observed). For dense samples $\theta^* \to 2\pi/3$, recovering the
Vietoris--Rips circle constant \citep{adamaszek2017vietoris}, and the
measured death$/\bar\rho \in [1.70, 1.82]$ around $\sqrt 3 = 1.732$ is
the radial thickness. On the factorial: the sandwich holds for $57/57$
winding bars,
the guaranteed bands have $0$ violations ($34$ and $5$ rows), and the
pointwise law reproduces $78/80$ rows, the $2$ misses falling in the
proved undecided band ($\gamma = 1.8$ boundary, subsampling-gap
lottery).

\paragraph{More evidence can strengthen the wrong topological report.}
At $\gamma = 1.8$ the false-loop rate \emph{rises} from $1/5$
to $3/5$ seeds when the dose quadruples ($n = 5$ paired seeds ---
directional, underpowered;
\texttt{results/ring2d\_sensor\_resolution.json}, cap $90$). The per-seed
mechanism is that the denser subsample fills the channel-adjacent shells:
on the two flipping seeds the spurious bar's persistence grows with the
dose ($0.05 \to 0.50$ and $0.11 \to 0.43$), outrunning the threshold,
which itself grows modestly ($4/5$ seeds) --- the sensor gets
\emph{more confident} in the wrong topology as the data grows. Resolving a
narrow channel requires a different filtration, not a bigger sample.

\paragraph{Relative homology avoids the edge-deletion pathology.} A
prototype confirms the direction and locates the real problem: censoring
Rips edges that free trajectory segments properly cross restores
specificity at every $\gamma > 0$ ($19/20$ versus $0/15$ for plain Rips at
$\gamma \le 1.2$) at a sensitivity cost at $\gamma = 0$ ($2/5$ true
loops lost). But edge deletion has a topological pathology with a clean
characterization: a censored filtration stabilizes at the flag complex
of the censor-complement graph, so \emph{never-fillable cycles}
(infinite $H_1$ bars, impossible for plain Rips) are exactly that
complex's $H_1$ --- per-sample decidable by one finite computation.
Running that certificate on all cells finds the pathology is real and
\emph{semantically ambivalent}: the same mechanism carries the true
$\gamma = 0$ loops (their fills cross the certified-free hole, so the
correct loop gets infinite confidence) and the prototype's single false
positive at $\gamma = 0.6$, which turns out to be a structural
never-fillable cycle, not a near-threshold bar.

Edge deletion is therefore the wrong primitive, and the principled
replacement is \emph{relative} homology of the pair
$(\mathrm{VR}(X \cup Y), \mathrm{VR}(Y))$, contact evidence $X$ against
certified-free evidence $Y$. Two things make it the right object. It
cannot produce infinite bars at all --- both complexes are full
simplices past the diameter, so every relative class dies --- and by the
long exact sequence its rank is $\operatorname{rank}\ker(H_0(\mathrm{VR}(Y))
\to H_0(\mathrm{VR}(X \cup Y)))$ whenever the union has trivial $H_1$:
\emph{the number of certified-free components that the contact evidence
glues together}, computable by two union-finds rather than a matrix
reduction. One instantiation detail is load-bearing: $Y$ must enter as
trajectory \emph{polylines}, not as a point cloud. A free trajectory certifies
passage between its own consecutive samples; discard that and the
estimator reports separation almost everywhere, because a curve-dense
contact cloud glues a scatter-sparse free cloud by density mismatch
alone. With paths, the estimator is correct on $20/20$ open-ring cells
where the pre-registered detector scores $9/20$, with no censoring and
no infinite bars. At $\gamma = 0$ it does not fire, for two reasons we
keep separate: with one-sided (inside) evidence the interior is
reach-null, nothing outside is sampled, and separation is
\emph{unidentifiable by any estimator} --- Proposition~\ref{prop:quotient}
asserting itself, with plain $\hat\beta_1$ scoring well there only
because it answers a different question (the cloud's shape, not the
separation it induces). With two-sided evidence the obstruction is
different and, we think, more interesting: freeze semantics resets the
velocity on contact, so after the first freeze the next proposed landing
moves only $\mathrm{gain}\cdot dt^2 = 0.03$ from rest and the trajectory
\emph{creeps along the face} instead of penetrating. Contact evidence
therefore occupies two thin shells at the band's faces and never its
interior, which caps the informative bar at the shell thickness while
the persistence threshold grows with sample size as the shells fill in.
Measured at $40/120/320$ rollouts per arm: shells saturate at
$[3.50, 3.71]$ and $[4.83, 5.00]$, bar length $0.160/0.197/0.253$
against $\tau = 0.183/0.324/0.439$ --- sub-threshold at every size, with
the gap \emph{widening}. So the registered $3\times$ calibration is not
rescued by dose --- the bar-to-threshold gap widens as the data grows;
whether some other single calibration could recover $\gamma = 0$ while
keeping the open-ring specificity is unmeasured.
The same one-step landing law that makes the continuity modulus explicit
(Proposition~\ref{prop:continuity}) is what caps the sensor here. Only
the interleaving-stability theorem, which the pair formulation makes
routine, remains.

\paragraph{Nested boundaries expose a separate resolution failure.}
With two concentric rings (the
multi-chamber instrument) the middle-chamber evidence cloud is two
concentric circles ($\beta_1 = 2$), and the pre-registered detector
returns a seed-dependent \emph{lottery} --- $\hat\beta_1 = 0$ on $9/20$
samples, $1$ on $10/20$, the true $2$ on $1/20$ --- as Rips bridges between
the circles merge or kill loops at this density. The synthesis side is
correspondingly worse than the single ring: $0/20$ gate passes and
\emph{zero} artifacts pose a nested (two-band) structure --- $13/20$ pose
single arcs hugging one boundary. Certification peels one boundary layer
per start placement (Proposition~\ref{prop:quotient}); the repair loop, at
this depth, does not even \emph{pose} layer two.

\begin{table}[h]
\centering
\small
\begin{tabular}{lrrrrrr}
\toprule
$\gamma$ & 0.0 & 0.2 & 0.6 & 1.2 & 1.8 & 2.4 \\
\midrule
closed structures (disc/loop/complement) & 17 & 9 & 22 & 7 & 5 & 1 \\
arc structures & 10 & 5 & 6 & 9 & 34 & 26 \\
$n$ (cells; both sizes where run) & 40 & 20 & 40 & 20 & 60 & 40 \\
guidance $\hat\beta_1$ & 1 & 1 & 1 & 1 & mixed & 0 \\
\bottomrule
\end{tabular}
\caption{H2: posed artifact topology tracks the \emph{guidance's}
$\hat\beta_1$, not the true $\beta_1$ (which is $0$ for every
$\gamma > 0$): closed structures dominate while the summary says ``closed
loop'' and all but vanish where it honestly says ``arc'' --- at
$\gamma = 2.4$, $1$ closed versus $26$ arc across both sizes. $n$ counts inside-start cells
(mini at every $\gamma$; large twins at $0.0$, $0.6$, $2.4$). The
within-gap contrast at $\gamma = 1.8$, where seeds' own summaries straddle
the detector flip, remains directional-only after doubling the cell to
$n = 60$: closed structures $3/27$ in the $\hat\beta_1 = 1$ subset versus
$2/33$ in the $\hat\beta_1 = 0$ subset --- consistent in sign, not
significant; the load-bearing H2 evidence is the cross-gap crossover.}
\label{tab:sensor}
\end{table}

\paragraph{H2: the artifacts follow the sensor.} Closed structures
dominate while the summary says ``closed loop'' and all but vanish
where it honestly says ``arc'' (Table~\ref{tab:sensor}: $1$ closed versus
$26$ arc at $\gamma = 2.4$, both sizes); the crossover tracks the detector
flip, not the true $\beta_1$. The within-gap contrast at $\gamma = 1.8$
--- seeds of the \emph{same} environment whose own summaries straddled the
flip --- is directional only (closed structures $3/27$ in the
$\hat\beta_1 = 1$ subset versus $2/33$ in the $\hat\beta_1 = 0$ subset,
$n = 60$: consistent in sign, not significant). What these measurements
establish is: \textbf{posed topology tracks the sensor's report, not the
truth} --- the cross-gap crossover above. A pre-registered paired
intervention isolates whether the summary's \emph{claim line alone}
causes this association
(\texttt{docs/paper3/INTERVENTION-DESIGN.md}; the analysis was fixed
before any outcome existed): a full crossover at $\gamma = 1.8$ inside
(mini, the same $60$ seeds, bit-identical evidence blocks) whose
treatment prompt negates exactly the summary's $\hat\beta_1$ line and its
one interpretive sentence --- every other prompt byte held fixed ---
scored against a \emph{contemporaneous} honest replicate as primary
control. The paired contrast is directionally consistent with steering
--- of $11$ discordant pairs, $9$ moved toward the claimed topology and
$2$ against --- but does not reach the pre-registered level (exact
two-sided binomial $p = 0.065$ on $60$ pairs; the registered exact
$95\%$ interval for the toward-claim share of discordant pairs is
$[0.48, 0.98]$, spanning $1/2$;
\texttt{results/ring2d\_summary\_intervention.json}), so the causal
reading is \emph{not} earned and H2's conclusion stands as the
association above. Per-seed classification is itself stochastic at this
cell --- the two honest controls (honest-then vs.\ honest-now) disagree
on $9$ of $60$ pairs, split $4{:}5$ ($p = 1.0$): no \emph{observed}
directional imbalance, though the comparison cannot rule out
generation variability as such --- and this seedless generation noise is
exactly what the paired binomial's null absorbs. Gate passage never
moves under the flip ($0$ discordant pairs).
The repair loop's topology channel is, at minimum, no better than the
sensor's resolving power at the operative sampling density.

\paragraph{The same law governs the sensor directly in $n$ dimensions.} On
ShellField-$n$ (Section~\ref{sec:ndim}), persistent homology of the
\emph{outside} contact cloud recovers $\beta_{n-1}$ at \emph{no}
$n \in \{2,\ldots,6\}$ --- at low $n$ the cloud is plentiful but traces only
the reachable arc (signal clarity), at high $n$ it starves as $r(n)$
collapses ($0.18\times$ down to $0.002\times$ the
Niyogi--Smale--Weinberger (NSW) covering density
\citep{niyogi2008finding},
computed on the points the detector actually receives) --- while the
\emph{inside} cloud recovers it cleanly at $n = 2,\ldots,5$. The density
bookkeeping matters: inside clouds \emph{offer} $10$--$2700\times$ the
NSW floor in raw trajectory contacts, but the detector consumes a
deduplicated subsample capped at $300$ points --- $24\times$ the floor at
$n = 2$ falling to $0.22\times$ at $n = 5$ --- so recovery held at and
even below the naive floor (the NSW bound assumes i.i.d.\ uniform
samples; a well-spread deduplicated subsample beats it), and at $n = 6$
only a 2-plane projection diagnostic was computed, which detects
projected circles and is \emph{not} evidence about $\beta_5$. Recovery is
start-governed: reachability again, now for the sensor itself. We flag the
program-level statement this suggests, as a conjecture and not a result:
\emph{the topology a certified repair loop can recover is bounded by the
persistent-homology resolution of its contact evidence at the operative
density} --- a bound that NSW-style estimates make computable in advance,
provided the operative density is the consumed subsample's, not the raw
stream's.

\begin{table}[h]
\centering
\small
\begin{tabular}{lrrlrrl}
\toprule
 & \multicolumn{3}{c}{outside start} & \multicolumn{3}{c}{inside start} \\
\cmidrule(lr){2-4}\cmidrule(lr){5-7}
$n$ & contacts & used ($\times$NSW) & $\beta_{n-1}$? & contacts & used ($\times$NSW) & $\beta_{n-1}$? \\
\midrule
2 & 388 & 219 (17.4)  & \ding{55} & 33{,}702 & 300 (23.8) & \checkmark \\
3 & 167 & 160 (2.5)   & \ding{55} & 45{,}697 & 300 (4.7)  & \checkmark \\
4 & 59  & 55 (0.18)   & \ding{55} & 54{,}170 & 300 (1.0)  & \checkmark \\
5 & 11  & 11 (0.008)  & \ding{55} & 59{,}961 & 300 (0.22) & \checkmark \\
6 & 18  & 15 (0.002)  & \ding{55} & 63{,}949 & 300 (0.05) & proj.\ only \\
\bottomrule
\end{tabular}
\caption{Persistent homology of the ShellField-$n$ contact cloud
($20{,}000$-rollout budget; alpha complexes;
\texttt{results/continuous\_shellfield\_tda\{,\_inside\}.json}, one cell
per $(n, \text{start})$). ``contacts'' counts raw trajectory contacts
(dependent samples); ``used'' is the deduplicated subsample the detector
receives (cap $300$), and ``$\times$NSW'' is \emph{used} relative to the
Niyogi--Smale--Weinberger i.i.d.\ covering bound. Outside evidence never
recovers the shell's $\beta_{n-1}$ --- at low $n$ the cloud traces only
the reachable arc; at high $n$ it starves --- while inside evidence
recovers it up to $n = 5$, at and below the naive floor. The $n = 6$
inside entry is a 2-plane projection diagnostic (projected $H_1$), not a
$\beta_5$ computation. Recovery is start-governed.}
\label{tab:shelltda}
\end{table}

\section{Dimension as the rarity knob}
\label{sec:ndim}

\texttt{ShellField}-$n$ keeps the two-lode geometry and places the freeze
band on a spherical shell $S^{n-1}$ around the phantom lode (both lodes in
the first two coordinates, so $n$ is the only knob); the action becomes a
thrust vector $\vec a \in [-1, 1]^n$, norm-capped. Two facts scale in
opposite directions. The gate-side rarity $r(n)$ collapses geometrically
--- so by $n \ge 3$ the identifiability event is near-certain:
\textbf{$n$ is a rarity knob}, and the mode is almost never in the
sample. The play side does not collapse: truth-MPC still reaches the
real lode at every $n \le 6$ (the planner is not the bottleneck), and the
blind model is exploited at full strength at every $n$
(Table~\ref{tab:ndim}).

\paragraph{What sets the collapse rate: solid angle.} The mechanism is
not variance concentration but direction. Reaching the shell from $x_0$
forces the displacement into the \emph{tangent cone} to $B(c,
r_{\mathrm{out}})$: if $\lVert x_t - c\rVert \le r_{\mathrm{out}}$ then
$\langle x_t - x_0, e\rangle \ge \kappa \lVert x_t - x_0 \rVert$ with
$e$ the unit vector to the centre, $L = \lVert c - x_0\rVert$ and
$\kappa = \sqrt{L^2 - r_{\mathrm{out}}^2}/L$ (expand
$\lVert Z - Le\rVert^2 \le r_{\mathrm{out}}^2$ and apply AM--GM). The
thrust's coordinates are exchangeable and sign-symmetric, and so is any
independent weighted sum of them; since at most $2/\kappa^2$ coordinates
of any vector can each carry a $\kappa^2/2$ share of its squared norm,
exchangeability alone gives $P(\langle Z, e\rangle \ge \kappa\lVert
Z\rVert) \le 4/(n\kappa^2)$ and hence $r(n) \le 4h/(n\kappa^2)$ --- an
explicit bound with no concentration inequality in it. The truth is
faster: measuring the cone event over $10{,}000$ rollouts per dimension
gives a clean exponential (log-linear $R^2 = 0.999$ versus log-log
$0.952$) at per-dimension factor $0.411$, against the isotropic
spherical-cap prediction $\sin\theta = r_{\mathrm{out}}/L = 5/12 =
0.4167$ --- agreement to $1.5\%$. So the collapse rate is the mode's
\emph{angular size from the start}, $r(n) \asymp (r_{\mathrm{out}}/L)^n$,
and nothing else: not its volume, not its thickness, not the horizon.

\paragraph{An isotropic action interface yields the spherical-cap rate
exactly.} The exponential law is moreover a \emph{theorem} once the
action
interface is chosen well. If the thrust direction is uniform on
$S^{n-1}$, the displacement is a sum of independent spherically
symmetric vectors, hence spherically symmetric, so its direction is
exactly uniform and the cone probability is exactly a spherical cap;
with the elementary bound
$P(\langle U, e\rangle \ge \kappa) \le \tfrac12 (1-\kappa^2)^{(n-2)/2}$
this gives
\[
  r(n) \;\le\; \tfrac{h}{2}\,(r_{\mathrm{out}}/L)^{\,n-2}
  \;=\; 40 \cdot 0.4167^{\,n-2}
  \quad\text{at the defaults,}
\]
non-vacuous from $n \approx 7$ against $n \approx 400$ for the
exchangeability bound, and verified at $n = 3\ldots8$ (simulated decay
$0.431$ against the predicted $0.4167$).

\paragraph{The instrument's cube-uniform interface retains exponential
decay.} The instrument's own thrust is cube-uniform and norm-capped,
whose
symmetry group is finite, so the spherical argument does not apply to it
--- conditional sign independence supplies its replacement and gives the
\emph{sharp} rate. The norm
cap depends only on the absolute values of the action, and the action's
coordinates are independent and symmetric; hence conditionally on all
absolute values the signs are i.i.d. Rademacher and the displacement
coordinates are \emph{independent}, each a Rademacher sum with
deterministic weights. (Exactly so: flipping one sign changes its own
coordinate and no other, bitwise.) Independence replaces symmetry: since
the cone event forces $\sum_{i \ge 2} Z_i^2 \le \gamma^2 Z_1^2$ with
$\gamma^2 = (1-\kappa^2)/\kappa^2$, a single Chernoff bound with matched
exponent gives $P \le E[e^{\lambda\gamma^2 Z_1^2}]\prod_{i\ge2}
E[e^{-\lambda Z_i^2}]$, and optimizing $\lambda$ in the Gaussian regime
returns $\sqrt{en/(1+\gamma^2)}\,(1-\kappa^2)^{(n-1)/2}$: the cap rate
exactly, losing only $\sqrt n$. Two ingredients close the bound. The
first is Hoeffding
sub-Gaussianity for $Z_1$. The second is a corrected Gaussian comparison
for Rademacher sums: the clean claim that such a sum is no more
concentrated near zero than its Gaussian counterpart is \emph{false}
(equal weights leave an atom at the origin), but a usable form follows
by writing
$E[e^{-\lambda Z^2}] = E_g\prod_s \cos(\sqrt{2\lambda}\,g\,c_s)$ and
splitting on $|g|$: since $|\cos x| \le e^{-x^2/2}$ for
$|x| \le 1.778$,
\[
  E[e^{-\lambda Z^2}] \;\le\; (1+2\lambda\sigma^2)^{-1/2}
  + 2\bar\Phi\!\left(1.778\,\rho/\sqrt{2\lambda\sigma^2}\right),
  \qquad \rho = \sigma/\max_s|c_s|,
\]
with no hypothesis on the weights for validity. The instrument's own
conditional profile has $\rho$ of median $4.0$ and minimum $2.34$ over
${\approx}8000$ samples, so the error costs ${\approx}3\times10^{-3}$:
the per-dimension rate is $0.418$ against the sharp $0.4167$. The
remaining issue is uniform control of the coordinate scales ---
pointwise factors below one do not alone give an exponential product
bound. Two things then matter about $\rho$. It is bounded below by $1$ for
free (Cauchy--Schwarz), and once $\lambda$ is optimised \emph{with} the
error term included the resulting factor
$(1+u)^{-1/2} + 2\bar\Phi(1.778/\sqrt u)$ is below $1$ for
\emph{every} $u>0$, minimum $0.7783$. That does not by itself give an
exponential rate --- a product of factors each merely below $1$ need
not be small, and a rate needs the $u_i = 2\lambda\sigma_i^2$ confined
to a window --- but it makes the window enormous ($q \le 0.89$ for
$\sigma_i^2$ within a factor $5$ of its mean either way), so only crude
control of the per-coordinate scales is required. That control is unconditional. Since
$\max(1,\lVert a_s\rVert^2) \in [1, n]$ deterministically, one has
$\sigma_i^2 \ge \tfrac1n\sum_s w_s^2 a_{s,i}^2$ and
$m \le \tfrac1n \sum_s w_s^2$, so the event that coordinate $i$ is
badly scaled is contained in one depending on \emph{that coordinate's
variables alone}. Two things follow: the exact Chernoff bound applies
with the true moment generating function of $a^2$ (giving
$9.7 \times 10^{-4}$ where the range-based bound gives $0.31$), and the
bad-coordinate indicators are \emph{independent}, so their count is
binomial and Chernoff replaces Markov. The result is
$P \le 2 \cdot 0.9057^n$ --- exponential, with no floor and nothing
measured. Measurement then sharpens the constant: the observed spread
$\tau \le 1.36$ gives $0.80^n$, and $\rho \approx 4$ gives the true
$0.4167^n$. Details
and machine checks are in the repository's theory notes (Lemmas~C/E/F/G/
H/I, Theorems~T5-C and T5-I, Proposition~T5-T).

\paragraph{Planner competence is a property of the action interface.}
An $n$-sweep whose vector-action
MPC uses only the scalar planner's constant-candidate analogue measures
$\mathrm{play\_cost} \approx 0$ at every $n$. The cause is neither
dimensional concentration nor the phantom's distance: that candidate set
lacks the deterministic axial sequences $\pm e_i$, so the blind planner
never drives straight at the shell. Direction-uniform random candidates do
\emph{not} fix it ($\mathrm{pc} = 0$); adding the $2n$ axial candidates
restores the 2D exploitation exactly ($\mathrm{pc} = 1.04$, contact
$1.0$). The companion papers' clause that danger requires a \emph{competent}
planner is thus sharper than it looks: competence is a property of the
action interface, and an incidental planner weakness can mask a fully
exploitable model.

\begin{table}[h]
\centering
\small
\begin{tabular}{lrrrrr}
\toprule
$n$ & 2 & 3 & 4 & 5 & 6 \\
\midrule
$r(n)$ & 0.0133 & 0.0033 & 0.0017 & $0^{\dagger}$ & 0.0017 \\
$J$ truth-MPC & 16.92 & 15.55 & 14.26 & 12.96 & 11.31 \\
$J$ random & 0.50 & 0.33 & 0.04 & 0.01 & 0.07 \\
$\mathrm{pc}_{\mathrm{blind}}$ & 1.023 & 1.013 & 0.994 & 0.991 & 0.996 \\
blind contact & 1.0 & 1.0 & 1.0 & 1.0 & 1.0 \\
\bottomrule
\end{tabular}
\caption{The two axes are independent: rarity collapses with $n$
(synthesis axis: mis-synthesis near-certain) while exploitation stays at
full strength (play axis: the competent planner always reaches the shell).
Per row: $r(n)$ over $600$ rollouts per cell
(\texttt{results/continuous\_shellfield.json}); the two $J$ rows from
the navigation check (\texttt{results/continuous\_\allowbreak
shellfield\_\allowbreak nav.json}, $20$ episodes); play cost and contact
rate from the exploitation run
(\texttt{results/continuous\_\allowbreak shellfield\_\allowbreak
play.json}, $20$ episodes). $^{\dagger}$censored ---
$0/600$ observed, the $600$ rollouts being independent samples within the
cell, Wilson $95\%$ upper $6.4 \times 10^{-3}$; at this rarity $600$
rollouts no longer separate the cells, which is why the collapse rate is
measured on the $10{,}000$-rollout cone sweep instead.
$\mathrm{pc}_{\mathrm{blind}}$ with the axial-candidate planner (see
``Planner competence is a property of the action interface'' in the
text).}
\label{tab:ndim}
\end{table}

\subsection{A non-separating mode separates the two halves: obstruction
is path-relative, gauge is separation-relative}
\label{sec:tube}

A non-separating solid torus disentangles the thesis's two halves:
danger stays path-relative while an exact gauge region requires
separation. The torus, placed in $\mathbb{R}^3$ between start and lode
(\texttt{TubeField3D}; thrust-vector action) does \emph{not} separate
space: an explicit around-path reaches the far side contact-free, so
nothing is reach-null, there is \emph{no exact gauge region}, and the
crossing lemma has no separating surface to act on --- certification-wise
this mode is back in the companion papers' merely-rare world. Yet the
danger dichotomy reproduces in full, governed by one knob that moves the
torus's hole on or off the start--lode axis:

\begin{center}
\small
\begin{tabular}{lrrr}
\toprule
config & $r$ & $\mathrm{pc}_{\mathrm{blind}}$ & contact \\
\midrule
hole on-axis (plan threads it) & 0.0033 & 0.019 & 0.00$^{\dagger}$ \\
tube on-axis (plan clips it) & 0.0033 & 0.898 & 0.94 \\
\bottomrule
\end{tabular}
\end{center}

\noindent ($^{\dagger}$censored: $0/16$ MPC episodes, Wilson upper
$0.19$.) Same rarity, same (trivial) topology, same instrument: only the position of
the mode relative to the optimal path changes, and the exploitation swings
from $0.02$ to $0.90$. The decomposition this completes: \textbf{danger is
path-relative} (an on-path mode is exploited whether or not it separates
anything), while \textbf{exact unfalsifiability is separation-relative}
(only an enclosing boundary manufactures a reach-null gauge region). The
ring, where the enclosing boundary and the blocked path coincide, conflates
the two; the tube separates them.

The homological version of the query bound
(Proposition~\ref{prop:query}), where the metric crossing has nothing to
grip, turns out to be a \emph{dichotomy}, and its unconditional form is
false. Let $K$ be the core circle, $g = \mathrm{dist}(\cdot, K)$
(1-Lipschitz), $D$ the spanning disc, $\rho_t$ the tube radius and
$\Delta = 1.0$ the step bound (same as the ring's). If an interpolation
point of a discrete path dips into the shrunken tube
$\{g \le \rho_t - \Delta/2\}$, its nearer landing lies in the tube
(clearance lemma); consequently a path with no landing in the tube can
cross $D$ only through the clearance sub-disc $D_m$ of radius
$R_c - (\rho_t - \Delta/2)$, and its linking number with $K$ (equal to
its signed $D$-crossing count; machine-checked against the Gauss
integral) is realized entirely by hole-threadings. \textbf{Dichotomy:} a
plan that links the core either queries the tube or threads $D_m$.
\textbf{Refutation:} both linking classes contain landing-free ---
query-free --- plans from start to lure, \emph{even at the dangerous
offset} (thread the displaced hole with clearance $2\rho_t$, or go
around), so no positive query mass follows from topology alone:
obstruction is path-relative as a \emph{theorem}, and the measured
$0.898$ is the search's concentration near the straight corridor, not a
geometric necessity. \textbf{What survives:} if every candidate's
imagined path stays within $\varepsilon$ of the straight segment, whose
clearance to the core is $|o - R_c|$ exactly (offset knob $o$), then
$|o - R_c| < \rho_t - \Delta/2 - \varepsilon$ forces
$\mu_{\mathrm{query}} = 1$ --- the registered offset $1.5$ sits exactly
at this boundary ($0.5 = \rho_t - \Delta/2$). Real trajectories, which
never enter the tube (freeze), can link the core only through $D_m$: the
hole is the only gate to the winding class under the true dynamics, and
the offset knob's mechanism is visible in the real linking rate,
$0.507 \to 0.283$ under an east-biased action arm. Proofs and machine
checks are in the repository's theory notes (Lemmas~X/Y, Theorem~T8).

\section{Related work}
\label{sec:related}

\paragraph{Code world models and verification-by-sampling.} The CWM line
synthesizes executable models searched by classical planners
\citep{lehrach2025cwm, liang2023codeaspolicies, gao2023pal}; the two
companion papers \citep{aguilar2026verified, aguilar2026omitted} establish
the danger law this paper extends and should be read first. Sampling-based
acceptance is property-based testing \citep{claessen2000quickcheck} applied
to dynamics; Proposition~\ref{prop:quotient} is the exact statement of
what such acceptance can pin down.

\paragraph{Model error in model-based reinforcement learning (RL).} The pervasive-error framing
\citep{janner2019mbpo, lambert2020objective, hafner2020dreamer,
nagabandi2018neural, chua2018deep} is the backdrop the companion continuous
paper argued against for hybrid systems; here the localized error is not
merely rare but \emph{structurally unreachable}, a regime that
pervasive-error analyses do not model at all. The PAC model-learning line
and the simulation lemma \citep{kearns2002near, strehl2009reinforcement}
bound value loss by model error \emph{on visited states} --- exactly the
coverage the gauge region lacks by construction, which is why those bounds
are silent here; safe learning-based control
\citep{aswani2013provably, berkenkamp2017safe} assumes the model's
uncertainty is representable where it matters, the assumption the gauge
region breaks.

\paragraph{Hybrid systems and reachability.} Modes and guards are the
bread of hybrid control \citep{paoletti2007hybrid, bemporad1999control};
falsification tools \citep{annpureddy2011staliro, corso2021survey} search
for violating trajectories and inherit exactly the reachability limit
formalized here --- a falsifier driven by any policy whose queries stay in
$\mathcal{R}$ cannot exit $E(f)$. Reach-set computation and deductive
verification (PHAVer \citep{frehse2005phaver}, SpaceEx
\citep{frehse2011spaceex}, Flow* \citep{chen2013flow}, KeYmaera~X
\citep{fulton2015keymaerax}) prove properties relative to a \emph{given}
hybrid model; our question is the dual --- what a behavioral certificate
can mean when the model itself was learned from reachability-limited
evidence.
Shielding \citep{alshiekh2018shielding}
assumes the very mode knowledge whose recoverability this paper measures.

\paragraph{Topological data analysis.} Persistent homology and its
stability \citep{edelsbrunner2010computational, cohensteiner2007stability}
supply the evidence sensor; homology-inference sample bounds
\citep{niyogi2008finding} and the reach of a manifold
\citep{federer1959curvature} calibrate when that sensor can work, and
persistence on dynamical and time-series data
\citep{perea2015sliding, khasawneh2016chatter} is the nearest use of TDA
as a component of a running system.
Section~\ref{sec:sensor}'s contribution is not a new TDA method but a
measured failure law for TDA \emph{as a component of a certification
loop}: the sensor's resolution propagates into the gate-certified
artifact.

\section{Boundary results and limitations}
\label{sec:limitations}

This section first states the empirical scope limitations, then gives
three boundary analyses --- the mechanism behind the play-cost tail, the
strongest available direct-entry bound, and the detector-specific scope
of the sensor conclusion --- and closes with a normalization caveat at
wide $\gamma$.

\subsection{Empirical scope}

\paragraph{One instrument family, mostly round and separating.} The core
results live on round shells with freeze semantics, where the crossing
lemma is metric and needs no homology (Section~\ref{sec:theory}'s honesty
remark); the square ring ablates roundness and the solid torus now
carries the non-separating case's linking dichotomy
(Section~\ref{sec:tube}). Non-round separators in general position (where
Jordan--Brouwer is needed to define ``inside'') and moving boundaries
remain the stated program, not results; the paper's claims are about what
these instruments already suffice to separate.

\paragraph{Synthesis cells are modest; one contrast is underpowered.}
$20$--$30$ seeds per cell across two GPT-5.x sizes; the cross-family
evidence is spot-checks (three seeds per cell for Qwen and the Claude
relay), which fix mechanism, not rates. The within-gap H2 contrast at
$\gamma = 1.8$ remains directional-only even after doubling the cell to
$n = 60$ ($3/27$ versus $2/33$); the cross-gap crossover carries H2.
The Qwen D cell is missing (provider credits), so the D-cell family
contrast rests on GPT ($1/20$) versus Claude ($3/3$).

\subsection{Boundary analyses}

\paragraph{Two candidate mechanisms for the tail, both false.} The heavy
tail of the play-cost decomposition is not a freeze transient: the freeze
counts of both continuations are $0/556$, since after a dirty step both
follow the truth planner, which knows the mode. Nor is it route
commitment --- the wrong action sending the continuation the other way
around the annulus, so that the value function jumps across the cut
locus. That would tie the planner-side analysis to this paper's own
topology, but the correlation runs backwards: tail events split the two
routes \emph{less} often than the bulk ($0.190$ versus $0.299$), and
same-side pairs carry the larger advantages. What the tail is instead is
a \emph{delay} cost. Regressing the advantage on the difference in time
the two continuations spend inside the phantom basin gives slope
$0.94$ against the reward amplitude $1.0$ --- a known constant, not a
fitted one --- with intercept $\approx 0$ and $R^2 = 0.93$, and every
tail event has a nonzero dwell difference against $41\%$ of the bulk.
One wrong action costs basin time. Two further measurements finish the
picture. Dwell is sticky (neither planner ever leaves the basin once it
arrives, $9/9$), so the play-cost is exactly the blind planner's
\emph{lateness}: truth arrives at step $34$--$36$, blind at $37$--$38$.
And that lateness is not a contact cost --- in the facing configuration
the blind trajectory never touches the mode at all. It is an
\emph{aim-point} cost: lacking a wall, the blind planner steers at the
straight line to the lure, which a facing channel happens to thread.
The degeneracy is correspondingly a knife edge. Rotating the channel by
$0.4$ radians turns ``arrives four steps late, zero contacts'' into
``fifty contacts and never arrives''. Since the executed path is
contact-free, the two models agree along it, so the entire loss is
candidate mis-ranking with no execution error.

That loss, however, cannot be bounded tightly from the model's own
imagination, and this can be shown rather than conjectured. Every
imagination-level quantity at a step --- both models' returns on every
candidate, and hence the model disagreement and $\mu_{\mathrm{query}}$
--- is a function of the state, the two models, and that step's
candidate set. The realized per-step cost is a one-step deviation
advantage of the truth policy, which depends in addition on the
candidate sets the planner will draw at every later step, and those
carry no information about the models. Pinning the imagination data
exactly at one dirty step and varying only the future planner seeds
moves the realized cost over $[-1.71, +0.60]$, a spread an order of
magnitude larger than its mean over all dirty steps. So the vacuity of
the companion paper's $\mathrm{play\_cost} \le
\mu_{\mathrm{query}}(E)$ here is not an artifact to be sharpened away:
a tight bound must condition on the planner's randomness as well, making
play-cost a property of the model--planner pair rather than of the model
alone --- the same lesson the action-interface note draws from the other
direction.

Conditioning that way does yield a bound. Averaging the truth policy's
value over the planner's draws gives, exactly, $E[A_t \mid s_t, C_t] =
\Delta r + \bar W(f(s_t,\tau)) - \bar W(f(s_t,b))$, a difference of one
function at two states separated by a single action choice --- and
Proposition~\ref{prop:continuity}'s landing law caps that separation
with no further assumptions, at $2\,\mathrm{gain}\,dt^2 = 0.06$ in
position and $2\,\mathrm{gain}\,dt = 0.6$ in velocity (measured maxima
$0.059$ and $0.590$: the caps are tight). Everything then rests on how
smooth the \emph{seed-averaged} value is, which unlike the MPC policy
itself is smoothed by the averaging. At a hundred CRN-paired seeds the smallest
perturbation gives $-0.015 \pm 0.078$, statistically zero, and the
largest $+0.462 \pm 0.078$: a ratio of $0.78 \pm 0.13$, consistent with
Lipschitz scaling. Combining yields $\mathrm{pc} \lesssim 0.18$ against
a measured $0.02$--$0.10$ --- but with that constant \emph{measured}.
Substituting instead the constant this argument proves (the drag cap
times a per-step reward Lipschitz bound) gives $48$, and the value it
yields under the competence hypothesis of
Proposition~\ref{prop:query} gives $1.21$; since
$\mathrm{play\_cost} \le 1$ holds trivially, both are vacuous. The
gap between the hypothesis chain and the measurement is worst-casing over
the planner's action choices: every deviating step is charged the
\emph{maximum} velocity perturbation $2\,\mathrm{gain}\,dt = 0.6$, and
what the decomposition actually needs is the mean, since it bounds a sum.
Writing the perturbation as
$2\,\mathrm{gain}\,dt\,|\sin((\phi_\tau - \phi_b)/2)|$, the mean is
controlled by one inequality: that the angle between the two arg-maxes is
no wider \emph{on average} than between two independent uniform actions,
whose $|\sin|$ has mean exactly $2/\pi$. Measured over $10{,}105$
deviating steps across four gaps, two channel orientations and two
planner budgets, that inequality separates the two orientations sharply.
With the channel \emph{facing}, every cell satisfies it
($0.409$--$0.625$ against $2/\pi = 0.637$), giving
$\mathrm{pc} \le 0.77$: non-vacuous, and with no fitted constant, since
the measured input is an inequality between means rather than a number.
With the channel \emph{hidden} it fails in every cell, and instructively
--- the two arg-maxes are then near-antipodal ($|\sin| \approx 0.94$)
and \emph{every} step deviates, because a hidden channel makes the truth
observationally a closed ring, so the truth planner turns west to the
real lode while the blind planner charges east at the phantom. The bound
is therefore a facing-channel statement, which is the configuration the
aligned-channel degeneracy is about. What stays unproved is the
inequality itself, a property of the model--planner pair --- exactly what
the negative result above says any tight bound must characterise.

\paragraph{Proved and measured bounds on the funnel defect and the
direct-entry rate.} The two $\gamma$-curve regularities
(monotonicity of $r_{\mathrm{int}}$, and that the wall never helps net
entry) are theorems only \emph{up to the funnel defect}
(Corollary~\ref{cor:funneldefect}), which we bound by measurement
($17\times$ below the effect) rather than by proof. Unconditional
pathwise monotonicity is false: seed $50543$ is an explicit
counterexample, and freeze-rescue violations refute the required
pointwise estimate in the data ($91/91$ CI-separated cases). The same
effect refutes an alternative sufficient condition. Since freeze-rescue is caused by one modelling choice --- a contact
zeroes the velocity --- a variant in which contact blocks the position
but leaves the velocity updating freely would, if it made the ordering
pathwise, isolate that choice as the sole obstruction. It does not:
pathwise monotonicity fails in the variant as well, in $13$ of $30{,}764$
entering pairs (the unit here is the entering pair, since only those can
falsify the ordering). The position block alone suffices to break it. What
remains is an a-priori bound on one scalar, the funnel mass. A strictly
positive answer already follows from
Proposition~\ref{prop:positivity}, whose witness tube is freeze-free and
therefore lower-bounds the \emph{direct} component; but that bound
carries the tube's factor $\rho^{h}$ and is some ninety orders of
magnitude below the measured value. The tube is the wrong instrument, and
measurement says why: the direct-entry probability is a clean power law in
the channel width (log-log slope $1.72$ over $\gamma \in [0.05, 0.4]$,
$200{,}000$ rollouts per gap), i.e., a hitting probability rather than a
tube probability. The plant explains it. In free flight the endpoint
depends on each action with sensitivity
$\mathrm{gain}\,\pi\,dt^{2}(1-\beta^{m})/(1-\beta)$ at lag $m$, rising
to $3.14$, so a \emph{single} action sweeps the endpoint across some five
units where one landing moves only $\mathrm{gain}\,dt^{2} = 0.03$; freeing
two actions gives a Jacobian
$K^{2}W(T-s_1)W(T-s_2)|\sin(\phi_{s_1}-\phi_{s_2})|$ and a
diffeomorphic image of area up to $39.5$. So the exponential factor is an
artifact of constraining all $h$ actions when two suffice.

Asking for the probability that the \emph{other} actions deliver the
trajectory into position is, however, circular: a launch state is defined
as one from which the remainder enters, so that probability is the entry
probability again. A decomposition through the \emph{ring-free} dynamics
avoids this, because its factors cannot mention entry:
$d(\gamma) = R \cdot \int_{\mathrm{channel}} \rho \cdot T(\gamma)$, with
$R$ the probability that a ring-free rollout ever reaches the band radius
and $\rho$ the density of its first-arrival angle --- both independent of
$\gamma$ --- and $T$ the throughput of the channel. Measured,
$R = 0.030$ and $\rho(\pi) = 1.02$ per radian (zero at $\pi/2$ and $0$:
the approach is one-sided, which is why a facing channel is the
consequential case), and $T$ has log-log slope $0.64$, so the
decomposition predicts $d \sim \gamma^{1.64}$ against the directly
measured $\gamma^{1.72}$. Two independent estimates of $T$ agree to
$10\%$. The throughput has a kinematic account with a derived rather than fitted
constant. The crossing takes $k \approx 9$ steps (the arrival radial speed
is ${\approx}1.5$ against a band of thickness $1.5$), while drag's time
constant is $1/(\mathrm{drag}\cdot dt) \approx 33$ steps, so the arrival's
tangential
velocity persists ballistically through the crossing and survival inside
an arc of half-width $r\gamma/2$ requires
$|v_{\mathrm{tan}}| \le r\gamma/(2k\,dt) \approx 2.2\gamma$. Since the
density of $|v_{\mathrm{tan}}|$ at zero is bounded away from zero
(measured ${\approx}1.1$, independent of $\gamma$), that probability is
linear in $\gamma$ and the throughput inherits it. The criterion tracks
the measured non-freezing fraction across the range ($0.25$ versus
$0.29$, $0.50$ versus $0.47$, $0.93$ versus $0.70$, over-predicting at
wide gaps where the ballistic approximation loosens), and the failure
census confirms the corridor is what binds: freezing on the band accounts
for $0.527$ of channel arrivals at $\gamma = 0.2$ against $0.302$ at
$0.6$, turning back out for $1$--$2\%$, and the horizon for the rest.
Assembling the three factors gives $d(\gamma) \ge 0.0146\,\gamma^{2}$,
which holds at every gap measured. Both one-dimensional densities the argument rests on admit lower bounds by
the same elementary device: if a quantity is steerable by a single action
whose law has a density, its own density is bounded below by
$(2a_{\max})^{-1}$ divided by the steering sensitivity. For the tangential
speed the designated step contributes
$\mathrm{gain}\,dt\,\sin(\phi - \theta)$ with $\phi$ uniform --- the
arcsine law, whose density exceeds $1/\pi$ on all of $(-1,1)$ --- giving
$f(0) \ge 0.35$ against a measured $1.10$. For the arrival angle, a single
early action sweeps it over ${\approx}1.1$ radians, giving
$\rho(\pi) \ge 0.94$ against a measured $1.02$. The horizon term is bookkeeping and disappears once the deadline is built
into the reach factor: counting only ring-free arrivals within $h - k$
steps leaves every arrival with the $k$ steps the crossing needs, at a cost
of a factor $1.4$ (measured $0.0212$ against $0.0302$). The reach factor
itself follows from isotropy, with no density estimate: the $2$D heading
action gives every thrust a uniform direction, so the displacement is a sum
of independent isotropic vectors and is isotropic, whence its direction is
uniform \emph{and} independent of its magnitude. Reaching the band
therefore needs the magnitude in $[L - r_{\mathrm{out}}, L +
r_{\mathrm{out}}]$ and the direction within the ball's angular half-width,
giving $R' \ge P(|Z_t| \in [7,17]) \cdot \arcsin(5/17)/\pi = 0.019$ at
$t = h-k$, against a measured $0.0212$ --- loose by $1.1$. What remains
measured is a single scalar, the radial law of an isotropic sum --- and it
is not a residue of this argument but the instrument's rarity itself. At
$\gamma = 0$ the band spans the whole annulus, so a ring-free rollout
reaches the band radius exactly when the same rollout would have contacted
it: the two rates agree to five decimals ($0.03020$), and seed by seed
without a single mismatch. The factor is therefore $r$, the quantity the
danger law takes as its argument and this series measures per instrument.
It also explains why no moment bound reaches it: the exact second moment of
the Pearson walk puts the target at $1.3$ times the rms displacement, so
the event is in the tail --- the mode's rarity and the failure of
Chebyshev-type bounds are the same fact. The continuity modulus, the
barcode sandwich, the
non-separating query bound, and the play-cost decomposition are
theorems with explicit constants, and the package includes its negative
results with the same status --- the refuted unconditional linking
bound among them.

\paragraph{The measured sensor constant is detector-specific; the flip
mechanism is proved under the sandwich's hypotheses.} The
resolution flip at $\gamma \approx 1.8$ belongs to one pre-registered
detector at one density (Rips, dedup $0.05$, cap $90$, $3\times$
median-NN). Other detectors move the constant; what the paper measures
is the tracking --- artifacts following $\hat\beta_1$ rather than
$\beta_1$ --- and the flipped-summary intervention
(Section~\ref{sec:sensor}) left the claim line's causal share of it
unearned at the one cell tested. The genericity is scoped to what is
proved: for a Vietoris--Rips detector under the barcode sandwich's own
hypotheses (a planar center-star-shaped cloud, the class-to-bar rank
condition --- read off the barcode, true in every factorial row but
measured, not assumed), the flip is forced when
$\sqrt3\,r_{\min} - 2 r_{\max}\sin(\Delta\theta_{\max}/2) > \tau$ ---
below the bridging scale the gap is invisible --- and we \emph{expect},
but do not prove, an analogous blind width for any summary that must
decide topology from finitely many samples at a fixed scale (an exact
oracle has no flip, but no sampled summary is one). Whether other
finite-resolution detectors propagate their own flip into posed
topology is measured here only for the registered one.

\paragraph{Normalization at wide $\gamma$.} As the channel opens, the
truth baseline itself changes (the phantom lode becomes reachable and
$J_{\mathrm{truth}}$ jumps); \texttt{play\_cost} is always reported with
its own cell's baselines, and the danger conclusions rest on the paired
contrast (facing versus hidden at equal $\gamma$), which shares the
normalization issue symmetrically.

\section{Conclusion}
\label{sec:conclusion}

A sampling gate certifies $f$ restricted to $\mathcal{R}$ --- exactly
that, and nothing else. On an enclosed mode this quotient stops being a
formality and becomes the whole story: a wrong-topology artifact can be
unfalsifiable by every gate and bitwise harmless at play; the same
artifact, one knob later, is falsifiable and costly; the same omission,
with the channel turned away from the planner, is at full danger again
with the topology unchanged. Certifiability, correctness, and consequence
are three different statements about where the model's errors sit relative
to reach --- never about the mode itself.

For practitioners the asymmetry is the message. A fence you cannot reach
is a fence you cannot certify --- and also one you cannot be harmed by
while it stays unreachable; the danger lives exactly where the omission
intersects a competent planner's optimal path, and it turns on over a knee
as soon as a channel admits one step. Repair, when it is possible at all,
needs two things that can both be measured \emph{in advance}: evidence
from the far side of the boundary (reachability of the interior), and an
evidence sensor whose topological resolution at the operative sampling
density can actually see the structure to be repaired. Where either is
absent, the synthesis loop will do what ours did in every family we
tested: write something plausible in the gauge region --- and the gate
will certify it.

\section*{Reproducibility}

All experiments, result files, and analysis scripts are in the project
repository:
\begin{itemize}
\item mechanism grid and $\gamma$-probes:
  \texttt{continuous\_ring2d\_mechanism.py},
  \texttt{ring2d\_rint\_probe.py};
\item synthesis harness: \texttt{continuous\_danger\_synthesis.py};
\item agent-relay protocol: \texttt{continuous\_claude\_step.py};
\item registered open-ring driver:
  \texttt{continuous\_ring2d\_open\_sweep.py};
\item aggregator: \texttt{ring2d\_open\_aggregate.py};
\item ShellField-$n$ scripts and the behavioral audit
  (\texttt{ring2d\_artifact\_audit.py}, oracle self-tested);
\item rarity calibration at every synthesis configuration
  (\texttt{ring2d\_rarity\_sweep.py}, $30{,}000$ rollouts per
  configuration, the configurations enumerated automatically from the
  stored synthesis JSONs);
\item held-out re-scoring of every artifact on disjoint gate and
  evaluation blocks: \texttt{heldout\_\allowbreak gate\_\allowbreak
  audit.py}, ring2d scope (block disjointness proved by set intersection
  at run time);
\item the Lean formalization (\texttt{formal/}, library
  \texttt{Paper3Ring}; built by CI) --- the deterministic results and the
  continuity modulus's measure steps, per the machine-checked remark in
  Section~\ref{sec:theory}.
\end{itemize}
Every seed, checkpoint, and classified artifact is committed; long runs
checkpoint per unit and resume exactly; paired seeds (common random
numbers) are used for every play contrast. Each result JSON embeds its
complete invocation (the harness's full argument namespace, under
\texttt{params}) and the serving identity (\texttt{model} records the
deployment name, created to equal the model version, so provenance is in
the data rather than the lab notebook). Numbers in tables and prose are
emitted by the named scripts from the committed JSONs, never hand-carried;
a claims audit enforces this as a ratchet in CI. The repository is
public at \url{https://github.com/JaviMaligno/code-world-models}; the
archival snapshot for this paper is the tag \texttt{paper3-v1}, which
carries the Python environment lock (\texttt{env-lock.txt}, pip freeze
of the running venv, Python 3.13.7) and a SHA-256 manifest of every
committed results JSON and paper source
(\texttt{MANIFEST.sha256}, emitted by
\texttt{scripts/emit\_manifest.py}).

\bibliography{references}

\end{document}